\def\eqref#1{equation~\ref{#1}}
\def\1{\bm{1}}
\def\eps{{\epsilon}}
\DeclareMathAlphabet{\mathsfit}{\encodingdefault}{\sfdefault}{m}{sl}
\SetMathAlphabet{\mathsfit}{bold}{\encodingdefault}{\sfdefault}{bx}{n}
\DeclareMathOperator*{\argmin}{arg\,min}
\newcommand{\coreset}{\textsc{Coreset}}
\theoremstyle{plain}
\newtheorem{theorem}{Theorem}[section]
\newtheorem{lemma}[theorem]{Lemma}
\theoremstyle{definition}
\newtheorem{definition}[theorem]{Definition}
\theoremstyle{remark}
\newcommand{\norm}[1]{\left\| #1 \right\|}
\newcommand{\br}[1]{\left\lbrace #1 \right\rbrace}
\newcommand{\term}[1]{\left( #1 \right)}
\renewcommand{\eps}{\varepsilon}
\newcommand{\abs}[1]{\left| #1 \right|}
\newcommand{\REAL}{\mathbb{R}}
\newcommand{\f}{\norm{\cdot}_1}
\newcommand{\RANGES}{\mathrm{ranges}}
\newcommand{\rbf}[2][]{\ifthenelse{\isempty{#1}}{e^{-\norm{#2 - x}^2_2}}{e^{-\norm{#2 - #1}^2_2}}}
\DeclareMathOperator*{\argsup}{arg\,sup}
\newcommand{\alglinelabel}{%
  \addtocounter{ALC@line}{-1}
  \refstepcounter{ALC@line}
  \label
}
\newcommand{\algorithmicreturn}{\textbf{return}}
\newcommand{\RETURN}{\item[\algorithmicreturn]}
\icmltitlerunning{Provable Data Subset Selection For Efficient Neural Network Training}
\begin{document}

\twocolumn[
\icmltitle{Provable Data Subset Selection For Efficient Neural Network Training}




\begin{icmlauthorlist}
\icmlauthor{Murad Tukan}{DH,rice}
\icmlauthor{Samson Zhou}{rice}
\icmlauthor{Alaa Maalouf}{DH,mit}
\icmlauthor{Daniela Rus}{mit}
\icmlauthor{Vladimir Braverman}{rice}
\icmlauthor{Dan Feldman}{haifa}
\end{icmlauthorlist}

\icmlaffiliation{DH}{DataHeroes, Israel.}
\icmlaffiliation{rice}{Department of Computer science, Rice university.}
\icmlaffiliation{mit}{CSAIL, MIT, Cambridge, USA.}
\icmlaffiliation{haifa}{Department of computer science, University of Haifa, Israel}

\icmlcorrespondingauthor{Murad Tukan}{murad@dataheroes.ai}

\icmlkeywords{Machine Learning, ICML}

\vskip 0.3in
]



\printAffiliationsAndNotice{} 

\begin{abstract}
Radial basis function neural networks (\emph{RBFNN}) are {well-known} for their capability to approximate any continuous function on a closed bounded set with arbitrary precision given enough hidden neurons. In this paper, we introduce the first algorithm to construct coresets for \emph{RBFNNs}, i.e., small weighted subsets that approximate the loss of the input data on any radial basis function network and thus approximate any function defined by an \emph{RBFNN} on the larger input data. In particular, we construct coresets for radial basis and Laplacian loss functions. We then use our coresets to obtain a provable data subset selection algorithm for training deep neural networks. Since our coresets approximate every function, they also approximate the gradient of each weight in a neural network, which is a particular function on the input. We then perform empirical evaluations on function approximation and dataset subset selection on  popular network architectures and data sets, demonstrating the efficacy and accuracy of our coreset construction.

\end{abstract}

\section{Introduction}
Radial basis function neural networks (\emph{RBFNNs}) are artificial neural networks that generally have three layers: an input layer, a hidden layer with a radial basis function (RBF) as an activation function, and a linear output layer. 
In this paper, the input layer receives a $d$-dimensional vector $x\in\mathbb{R}^d$ of real numbers. 
The hidden layer then consists of various nodes representing \emph{RBFs}, to compute $\rho(\|x-c_i\|_2):=\exp\term{-\norm{x-c_i}_2^2}$, where $c_i\in\mathbb{R}^d$ is the center vector for neuron $i$ across, say, $N$ neurons in the hidden layer. 
The linear output layer then computes $\sum_{i=1}^N\alpha_i\rho(\|x-c_i\|_2)$, where $\alpha_i$ is the weight of neuron $i$ in the linear output neuron. 
Therefore, \emph{RBFNNs} are feed-forward neural networks because the edges between the nodes do not form a cycle, and enjoy advantages such as simplicity of analysis, faster training time, and interpretability, compared to alternatives such as convolutional neural networks (\emph{CNNs}) and even multi-layer perceptrons (\emph{MLPs})~\cite{padmavati2011comparative}. 

\textbf{Function approximation via \emph{RBFNNs}.}  
\emph{RBFNNs} are universal approximators in the sense that an \emph{RBFNN} with a sufficient number of hidden neurons (large $N$) can approximate any continuous function on a closed, bounded subset of $\mathbb{R}^d$ with arbitrary precision~\cite{ParkS91}, i.e., given a sufficiently large input set $P$ of $n$ points in $\mathbb{R}^d$ and given its corresponding label function $y:P\to\mathbb{R}$, an \emph{RBFNN}, can be trained to approximate the function $y$. 
Therefore, \emph{RBFNNs} are commonly used across a wide range of applications, such as function approximation~\cite{ParkS91,ParkS93,LuSS97}, time series prediction~\cite{WhiteheadC96,LeungLW01,harpham2006effect}, classification~\cite{leonard1991radial,wuxing2004classification,babu2012sequential}, and system control~\cite{yu2011advantages,liu2013radial}, due to their faster learning speed. 

For a given RBFNN size, i.e., the number of neurons in the hidden layer, and an input set, the aim of this paper is to compute a small weighted subset that approximates the loss of the input data on any radial basis function neural network of this size and thus approximates any function defined (approximated) by such an \emph{RBFNN} on the big input data. This small weighted subset is called a coreset.

\textbf{Coresets.} Consider a prototypical machine/deep learning problem in which we are given an input set $P\subseteq\REAL^d$ of $n$ points, its corresponding weights function $w:P \to \REAL$, a set of queries $X$ {(a set of candidate solutions for the involved optimization problem)}, and a loss function $f:P \times X \to [0,\infty)$. The tuple $(P,w,X,f)$ is called the \emph{query space}, and it defines the optimization problem at hand --- where usually, the goal is to find $x^*\in \argmin_{x\in X} \sum_{p\in P} w(p) f(p,x)$. Given a query space $(P,w,X,f)$, a coreset is a small weighted subset of the input $P$ that can provably approximate the cost of every query $x\in X$ on $P$~\cite{Feldman20,jubran2021overview}; see Definition~\ref{def:epsCore}. 
In particular, a coreset for a \emph{RBFNN} can approximate the cost of an \emph{RBFNN} on the original training data for every set of centers and weights that define the \emph{RBFNN} (see Section~\ref{sec:RBFNN}). 
Hence, the coreset approximates also the centers and weights that form the optimal solution of the \emph{RBFNN} (the solution that approximates the desired function). 
Thus a coreset for a \emph{RBFNN} would facilitate training data for function approximation without reading the full training data and more generally, a strong coreset for an RBFNN with enough hidden neurons would give a strong coreset for any function that can be approximated to some precision using the RBFNN.

\textbf{To this end, in this paper, we aim to provide a coreset for \emph{RBFNNs}, and thus provably approximating (providing a coreset to) any function that can be approximated by a given \emph{RBFNN}.}

\begin{figure*}[htb!]
    \centering
    \includegraphics[width=\textwidth]{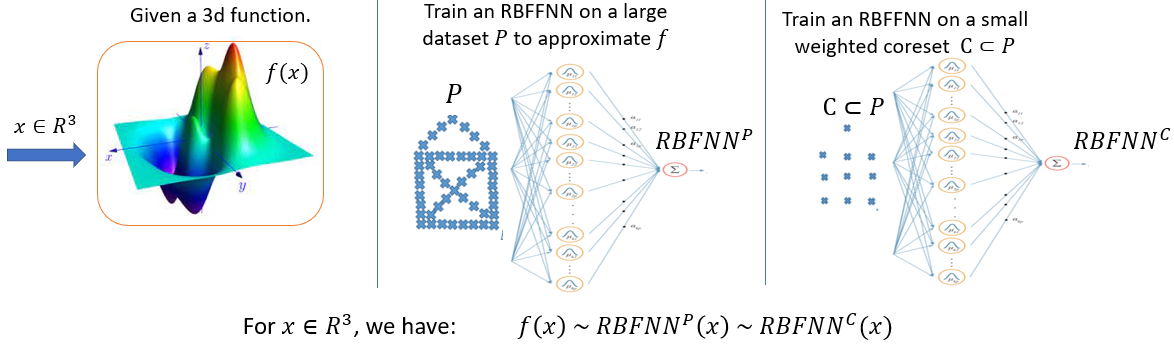}
    \caption{Our contribution in a nutshell.}
    \label{fig:novelty}
\end{figure*}

{Furthermore, we can use this small weighted subset (coreset) to suggest a provable data subset selection algorithm for training deep neural networks efficiently (on the small subset). 
Since our coreset approximates every function that can be approximated by an RBFNN of this size, it also approximates the gradient of each weight in a neural network (if it can be approximated by the RBFNN).}

\textbf{Training neural networks on data subset.}
Although deep learning has become widely successful with the increasing availability of data~\cite{KrizhevskySH17,DevlinCLT19}, modern deep learning systems have correspondingly increased in their computational resources, resulting in significantly larger training times, financial costs~\cite{SharirPS20}, energy costs~\cite{StrubellGM19}, and carbon footprints~\cite{StrubellGM19,SchwartzDSE20}. 
Data subset selection (coresets) allows for efficient learning at several levels~\cite{wei2014fast,kaushal2019learning,coleman2019selection,Har-PeledM04,Clarkson10}. 
By employing a significantly smaller subset of the big dataset, (i) we enable learning on relatively low resource computing settings without requiring a huge number of GPU and CPU servers, (ii) we may greatly optimize the end-to-end turnaround time, which frequently necessitates many training runs for hyper-parameter tweaking, and (iii) because a large number of deep learning trials must be done in practice, we allow for considerable reductions in deep learning energy usage and $\text{CO}_2$ emissions~\cite{StrubellGM19}. 
Multiple efforts have recently been made to improve the efficiency of machine learning models using data subset selection~\cite{mirzasoleiman2020coresets,killamsetty2021glister,killamsetty2021grad}. 
However, existing techniques either (i) employ proxy functions to choose data points, (ii) are specialized to specific machine learning models, (iii) use approximations of parameters such as gradient error or generalization errors, (iv) lack provable guarantees on the approximation error, or (v) require an inefficient gradient computation of the whole data. 
Most importantly, all of these methods are model/network dependent, and thus computing the desired subset of the data after several training epochs (for the same network) takes a lot of time and must be repeated each time the network changes. 

\textbf{To this end, in this paper, we introduce a provable and efficient model-independent subset selection algorithm for training neural networks. This will allow us to compute a subset of the training data, that is guaranteed to be a coreset for training multiple neural network architectures/models.}




\subsection{Our Contributions}
In this paper, we introduce a coreset that approximates any function can be represented by an \emph{RBFNN} architecture. 
Specifically: 
\begin{enumerate}[label=(\roman*)]
    \item We provide a coreset for the \emph{RBF} and Laplacian cost functions; see Algorithm~\ref{alg:mainAlg}, and  Section~\ref{sec:analysisCoresets}.
    \item We generate a coreset for any \emph{RBFNN} model, in turn, approximating any function that can be represented by the \emph{RBFNN}; see Figure~\ref{fig:novelty} for illustration, and  Section~\ref{sec:RBFNN} for more details.
    \item We then exploit the properties of \emph{RBFNNs}, to approximate the gradients of any deep neural networks (\emph{DNNs}), leading towards provable subset selection for learning/training \emph{DNNs}. We also show the advantages of using our coreset against previous subset selection techniques; see Section~\ref{sec:Better} and Section~\ref{sec:experiments}.
    \item Finally, we provide an open-source code implementation of our algorithm for reproducing our results and future research~\cite{opencode}.
\end{enumerate}

\subsection{Related Work}
A long line of active work has studied efficient coreset constructions for various problems in computational geometry and machine learning, such as $k$-means and $k$-median clustering~\cite{Har-PeledM04,Chen09,braverman2016new,HuangV20,jubran2020sets,Cohen-AddadLSS22}, regression~\cite{DasguptaDHKM08,Chhaya0S20,TolochinskyJF22,MeyerMMWZ22,maalouf2019fast,maalouf2022fast}, low-rank approximation~\cite{CohenMM17,BravermanDMMUWZ20,maalouf2020tight,maalouf2021coresets}, volume maximization~\cite{IndykMGR20,MahabadiRWZ20,WoodruffY22}, projective clustering~\cite{FeldmanSS20,Tukan0ZBF22}, support vector machines (SVMs)~\cite{Clarkson10,TukanBFR21,maalouf2022unified}, Bayesian inference~\cite{CampbellB18}, and sine wave fitting~
\cite{Maalouf2022Coresets}.
\cite{BaykalLGFR22} suggested coreset-based algorithms for compressing the parameters of a trained fully-connected neural network by using sensitivity sampling on the weights of neurons after training, though without pruning full neurons. 
\cite{MussayOBZF20,liebenwein2019provable,Tukan2022provable} sidestepped this issue by identifying the neurons that can be compressed regardless of their weights, due to the choice of the activation functions, thereby achieving coreset-based algorithms for neural pruning. 



These approaches use coresets to achieve an orthogonal goal to data subset selection in the context of deep learning -- they greatly reduce the number of neurons in the network while we greatly reduce the number of samples in the dataset that need to be read by the neural network. 
Correspondingly, we reduce the effective size of the data that needs to be stored or even measured prior to the training stage. 
Moreover, we remark that even if the number of inputs to the input layer was greatly reduced by these neural compression approaches, the union of the inputs can still consist of the entire input dataset and so these approaches generally cannot guarantee any form of data distillation. 

Toward the goal of data subset selection, \cite{mirzasoleiman2020coresets,MirzasoleimanCL20} introduced algorithms for selecting representative subsets of the training data to accurately estimate the full gradient for tasks in both deep learning and classical machine learning models such as logistic regression and these approaches were subsequently refined by \cite{killamsetty2021grad,killamsetty2021glister}. 
Data distillation has also received a lot of attention in image classification~\cite{BohdalYH20,NguyenNXL21,DosovitskiyB0WZ21}, natural language processing~\cite{DevlinCLT19,BrownMRSKDNSSAA20}, and federated learning~\cite{OzkaraSDD21,ZhuHZ21}. 

\textbf{On coresets for any function.} To the best of our knowledge, the only other coresets eligible for handling a wide family of functions without the need to devise a problem-dependent sensitivity are~\cite{claici2018wasserstein, claici2018wasserstein2}. While such coresets are interesting and related, (i) both works provide coreset constructions resulting in an additive approximation, (ii) the coresets' theoretical applications seem quite a bit restrictive as they intend to handle mainly a family of functions that are either $k$-Lipschitz (functions with bounded gradient, usually $k$), having a bounded Dual-Sobolev distance, or functions satisfying the properties of reproducing kernel Hilbert space (RKHS). In addition, the running time in the worst-case scenario is not practical, i.e., exponential in the dimension of the points.

On the other hand, our coreset under mild assumptions can satisfy any function approximated by RBFNN~\cite{wu2012using}, in time that is polynomial in the dimension of the points and linear in the number of nonzero entries of the points~\cite{clarkson2017low}.

\section{Preliminaries}
For an integer $n>0$, we use $[n]$ to denote the set $\{1,2,\ldots,n\}$. A weighted set of points is a pair $(P,w)$, where $P \subseteq \REAL^d$ is a set of points and $w : P \to [0, \infty)$ is a weight function.




We now formally provide the notion of $\eps$-coreset for the \emph{RBF} loss. This will be later extended to a coreset for \emph{RBFNN}.

\begin{definition}[\emph{RBF} $\eps$-coreset]
\label{def:epsCore}
Let $(P,w)$ be a weighted of $n$ points in $\REAL^d$, $X \subseteq \REAL^d$ be a set of queries, $\eps \in (0,1)$. For every $x \in X$ and $p \in P$ let $f(p,x) := \exp\term{-\norm{p - x}_2^2}$ denote the \emph{RBF} loss function between $p$ and $x$. An $\eps$-coreset for $(P,w)$ with respect to $f$, is a pair $(S,v)$ where $S \subseteq P$, $v:S\to (0,\infty)$ is a weight function, such that for every $x \in X$, $\abs{1 -  \frac{\sum_{q \in S} v(q) f(q,x)}{\sum_{p \in P} w(p) f(p,x)}} \leq \eps.$
\end{definition}
We say the \emph{RBF} coreset is \emph{strong} if it guarantees correctness over all $x\in X$. 
Otherwise, we say the coreset is \emph{weak} if it only provides guarantees for all $x$ only in some subset of $X$. 

\textbf{Sensitivity sampling.} 
To compute our \emph{RBF} $\eps$-coreset, we utilize the sensitivity sampling framework~\cite{braverman2016new}. 
In short, the sensitivity of a point $p\in P$ corresponds to the ``importance'' of this point with respect to the other points and the problem at hand. 
In our context (with respect to the \emph{RBF} loss), the sensitivity is defined as $s(p) = \sup_{x \in X} \frac{w(p)f(p,x)}{\sum_{q \in P} w(q)f(q,x)}$, where the denominator is nonzero. 
Once we bound the sensitivities for every $p\in P$, we can sample points from $P$ according to their corresponding sensitivity bounds, and re-weight the sampled points to obtain an RBF $\eps$-coreset as in Definition~\ref{def:epsCore}. 
The size of the sample (coreset) is proportional to the sum of these bounds -- the tighter (smaller) these bounds, the smaller the coreset size; we refer the reader to Section~\ref{sec:appendex_Coreset_const} in the appendix.


\textbf{Sensitivity bounding.} 
We now present our main tool for bounding the sensitivity of each input point with respect to the \emph{RBF} and \emph{Laplacian} loss functions.

\begin{definition}[Special case of Definition 4~\cite{tukan2020coresets}]
\label{def:l1svd}
Let $\term{P,w,\REAL^d, f}$ be a query space (see Definition~\ref{def:querySpace}) where for every $p \in P$ and $x \in \REAL^d$, $f(p,x) = \abs{p^Tx}$. Let $D \in [0, \infty)^{d \times d}$ be a diagonal matrix of full rank and let $V \in \REAL^{d \times d}$ be an orthogonal matrix, such that for every $x \in \REAL^d$, $\norm{DV^Tx}_2 \leq \sum\limits_{p \in P} w(p)\abs{p^Tx} \leq \sqrt{d} \norm{DV^Tx}_2.$ 
Define $U : P \to \REAL^d$ such that $U(p) = {p} \term{DV^T}^{-1}$ for every $p \in P$. The tuple $(U,D,V)$ is the $\f$-SVD of $P$.
\end{definition}

Using the above tool, the sensitivity with respect to the RBF loss function can be bounded using the following.
\begin{lemma}[Special case of Lemma 35,~\cite{tukan2020coresets}]
\label{lem:sens_bound}
Let $\term{P,w,\REAL^d, f}$ be query space as in Definition~\ref{def:querySpace} where for every $p \in P$ and $x \in \REAL^d$, $f(p,x) = \abs{p^Tx}$. Let $(U,D,V)$ be the $\norm{\cdot}_1$-SVD of $(P,w)$ with respect to $\abs{ \cdot }$ (see Definition~\ref{def:l1svd}). Then: \begin{enumerate*}[label=(\roman*)]
\item \label{claim:lzreg1} for every $p\in P$, the sensitivity of $p$ with respect to the query space $(P,w,\REAL^d,\abs{ \cdot })$ is bounded by $s(p) \leq \norm{U(p)}_1$, and
\item \label{claim:lzreg2} the total sensitivity is bounded by $\sum\limits_{p \in P} s(p) \leq d^{1.5}$.
\end{enumerate*}
\end{lemma}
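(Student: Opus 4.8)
The plan is to prove the two claims separately: claim~\ref{claim:lzreg1} will follow from Hölder's inequality combined with the lower-bound half of the $\f$-SVD guarantee, while claim~\ref{claim:lzreg2} will follow from the upper-bound half applied to a carefully chosen family of queries. Throughout I will exploit the defining relation $U(p) = p\term{DV^T}^{-1}$, equivalently $p = U(p)DV^T$.

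For claim~\ref{claim:lzreg1}, I would fix a point $p \in P$ and an arbitrary query $x \in \REAL^d$, and set $y = DV^Tx$. Then $p^Tx = \term{U(p)DV^T}x = \langle U(p), y\rangle$, so Hölder's inequality gives $\abs{p^Tx} = \abs{\langle U(p), y\rangle} \leq \norm{U(p)}_1 \norm{y}_\infty \leq \norm{U(p)}_1\norm{y}_2$. For the denominator of the sensitivity, the defining lower bound of the $\f$-SVD (Definition~\ref{def:l1svd}) yields $\sum_{q\in P} w(q)\abs{q^Tx} \geq \norm{DV^Tx}_2 = \norm{y}_2$. Dividing, the factors of $\norm{y}_2$ cancel and I obtain a bound that is independent of $x$; taking the supremum over $x\in\REAL^d$ then gives $s(p)\leq \norm{U(p)}_1$ (with the weight $w(p)$ absorbed into $U(p)$, following Definition 4 of~\cite{tukan2020coresets}).

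For claim~\ref{claim:lzreg2}, I would expand the total sensitivity coordinatewise: by claim~\ref{claim:lzreg1}, $\sum_{p\in P} s(p) \leq \sum_{p\in P}\norm{U(p)}_1 = \sum_{j=1}^d \sum_{p\in P}\abs{U(p)_j}$. The key observation is that, since $V$ is orthogonal and $D$ is diagonal, $\term{DV^T}^{-1} = VD^{-1}$, so the $j$-th coordinate of $U(p)$ equals $\tfrac{1}{D_{jj}}\, p^Tv_j$, where $v_j$ denotes the $j$-th column of $V$. I would then evaluate the upper-bound half of the $\f$-SVD guarantee at the query $x = v_j$: because $DV^Tv_j = D_{jj}e_j$ has norm $D_{jj}$, this gives $\sum_{p\in P} w(p)\abs{p^Tv_j}\leq \sqrt{d}\,D_{jj}$, and hence $\sum_{p\in P}\abs{U(p)_j}\leq \sqrt{d}$ for each coordinate $j$. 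Summing over the $d$ coordinates yields $\sum_{p\in P} s(p)\leq d\cdot\sqrt{d} = d^{1.5}$.

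The routine inequalities (Hölder and the cancellation of $\norm{y}_2$) are straightforward; the step I expect to be the crux is claim~\ref{claim:lzreg2}, specifically recognizing that probing the $\f$-SVD upper bound along the orthonormal directions $v_j$ is exactly what converts the per-coordinate $\ell_1$ mass of $U$ into the $\sqrt{d}$ factor, with the diagonal scaling $D_{jj}$ cancelling cleanly. A secondary point requiring care is the bookkeeping of the weight function $w$: since claim~\ref{claim:lzreg1} is stated with $\norm{U(p)}_1$, I must fix a single convention for whether $w(p)$ lives inside $U(p)$ or is carried separately, and verify that this convention makes the telescoping in claim~\ref{claim:lzreg2} produce exactly $d^{1.5}$.
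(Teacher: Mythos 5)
Your proof is correct and is essentially the argument behind the cited result: the paper itself never proves this lemma (it imports it as a special case of Lemma 35 of~\cite{tukan2020coresets}), and the standard proof there is exactly your combination of H\"older's inequality with the lower-bound half of the $\f$-SVD guarantee for claim (i), plus probing the upper-bound half at the orthonormal directions $v_j$ (with the $D_{jj}$ cancellation) for claim (ii). Your flag on the weight bookkeeping is also warranted: with $U(p)=p\term{DV^T}^{-1}$ as in Definition~\ref{def:l1svd} the bound one actually proves is $s(p)\le w(p)\norm{U(p)}_1$, which is precisely the form the paper uses downstream (Line~\ref{algLine:sensBound} of Algorithm~\ref{alg:mainAlg} and Lemma~\ref{lem:RBFsensbound}), so the lemma's statement should be read with the weight either absorbed into $U$ or carried separately, exactly as you note.
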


\begin{algorithm}[t]
   \caption{$\coreset(P, w, R, m)$\label{alg:mainAlg}}
   \label{alg:main}
\begin{algorithmic}[1]
   \INPUT  A set $P \subseteq \REAL^{d}$ of $n$ points, a weight function $w : P \to [0, \infty)$, a bound on radius \\ of the query space $X$, and a sample size $m \geq 1$
   \OUTPUT A pair $(S,v)$ that satisfies Theorem~\ref{thm:coreset:rbf}
   
   \STATE Set $d' :=$ the VC dimension of quadruple  $\term{P, w, X,\rho\term{\cdot}}$ \alglinelabel{algLine:3} \COMMENT{See Definition~\ref{def:dimension}} 
   
   \STATE $P^\prime := \br{q_p =  \begin{bmatrix} \norm{p}_2^2, -2p^T, 1 \end{bmatrix}^T \mid \forall p \in P}$ \alglinelabel{algLine:manipulate}

    \STATE $(U,D,V)$ := the $f$-SVD of $(P^\prime,w,\abs{\cdot})$\alglinelabel{algLine:l1svd} \COMMENT{See Definition~\ref{def:l1svd}}
   
   \FOR{every $p \in P$ \alglinelabel{algLine:4}}
   
        \STATE $s(p):= e^{12R^2} \term{1 + 8R^2} \term{\frac{w(p)}{\sum\limits_{q \in P} w(q)} + w(p)\norm{U(q_p)}_1}$ \alglinelabel{algLine:sensBound} \COMMENT{bound on the sensitivity of $p$ as in Lemma~\ref{lem:RBFsensbound} in the appendix} 
        
   \ENDFOR
   
   \STATE $t := \sum_{p \in P} s(p)$ \alglinelabel{algLine:6}
   
   \STATE Set $\tilde{c} \geq 1$ to be a sufficiently large constant \alglinelabel{algLine:7} \COMMENT{Can be determined from Theorem~\ref{thm:coreset:rbf}}

    \STATE Pick an i.i.d sample $S$ of $m$ points from $P$, where each $p \in P$ is sampled with probability $\frac{s(p)}{t}$ \alglinelabel{algLine:donesamle}

    \STATE set $v: \REAL^d \to [0,\infty]$ to be a weight function such that for every $q \in S$, $v(q)=\frac{t}{s(q)\cdot m}$. \alglinelabel{algLine:weight} \\
   \RETURN $(S,v)$ \alglinelabel{algLine:coresetDone}
\end{algorithmic}
\end{algorithm}

\section{Method}
In this section, we provide coresets for the Gaussian and Laplacian loss functions. We detail our coreset construction for the Gaussian loss function and Laplacian loss function in Section~\ref{sec:analysisCoresets}.  

\textbf{Overview of Algorithm~\ref{alg:mainAlg}. } Algorithm~\ref{alg:mainAlg} receives as input, a set $P$ of $n$ points in $\REAL^d$, a weight function $w : P \to [0,\infty)$, a bound $R$ on the radius of the ball containing query space $X$, and a sample size $m > 0$. If the sample size $m$ is sufficiently large, then Algorithm~\ref{alg:mainAlg} outputs a pair $(S, v)$ that is an $\eps$-coreset for RBF cost function; see Theorem~\ref{thm:coreset:rbf}. 

First, $d^\prime$ is set to be the VC dimension of the quadruple $\term{P, w, X, \rho\term{\cdot}}$; see Definition~\ref{def:dimension}. The heart of our algorithm lies in formalizing the RBF loss function as a variant of the regression problem, specifically, a variant of the $\ell_1$-regression problem. The conversion requires manipulation of the input data as presented at Line~\ref{algLine:manipulate}. We then compute the $f$-SVD of the new input data with respect to the $\ell_1$-regression problem followed by bounded the sensitivity of such points (Lines~\ref{algLine:l1svd}--\ref{algLine:sensBound}).
Now we have all the needed ingredients to obtain an $\eps$-coreset (see Theorem~\ref{thm:coreset}), i.e., we sample i.i.d m points from P based on their sensitivity bounds (see Line~\ref{algLine:donesamle}), followed by assigning a new weight for every sampled point at Line~\ref{algLine:weight}.

\subsection{Analysis}

\subsubsection{Lower bound on the coreset size for the Gaussian loss function}

We first show the lower bound on the size of coresets, to emphasize the need for assumptions on the data and the query space.

\begin{theorem}
There exists a set of $n$ points $P \subseteq \REAL^{d}$ such that {$\sum_{p \in P} s(p)=\Omega(n)$}.
\end{theorem}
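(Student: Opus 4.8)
The plan is to exhibit a point configuration in which, in the exponential sense, every point is far from all the others, so that each point can single-handedly dominate the \emph{RBF} cost for an appropriately chosen query, forcing its sensitivity to be bounded below by a constant. Concretely, I would work in dimension $d = n$ (any $d \ge n$ works), set $\rho := \sqrt{(\ln n)/2}$, take $P = \br{p_1,\ldots,p_n}$ with $p_i = \rho\, e_i$ where $e_1,\ldots,e_n$ is the standard orthonormal basis, and use uniform weights $w \equiv 1$. Then $\norm{p_i}_2 = \rho$ for all $i$, and $\norm{p_i - p_j}_2^2 = 2\rho^2 = \ln n$ for every $i \ne j$.

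For each index $j$ I would probe the sensitivity with the single query $x = p_j$. Then $f(p_j,x) = \exp\term{0} = 1$, while $f(p_i,x) = \exp\term{-\norm{p_i - p_j}_2^2} = \exp\term{-\ln n} = 1/n$ for every $i \ne j$. Hence $\sum_{p \in P} w(p) f(p,x) = 1 + (n-1)/n < 2$, and therefore
\[ s(p_j) \ge \frac{w(p_j) f(p_j,x)}{\sum_{p \in P} w(p) f(p,x)} = \frac{1}{1 + (n-1)/n} = \frac{n}{2n-1} > \frac{1}{2}. \]
Summing over $j \in [n]$ gives $\sum_{p \in P} s(p) > n/2 = \Omega(n)$, which is exactly the claim; one can additionally note that a total sensitivity of this order is what forces any sensitivity-sampling coreset to retain $\Omega(n)$ points.

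The only real obstacle is engineering a set in which each point can be isolated by some admissible query while the remaining $n-1$ points contribute a negligible \emph{total}. Individually each competitor contributes $\exp\term{-\norm{p_i-p_j}_2^2}$, so to keep the sum of all $n-1$ competitors at most a constant the pairwise squared distances must be at least $\approx \ln n$. Realizing this for $n$ points of modest norm is precisely what dictates (i) using high dimension — $n$ pairwise (near-)orthogonal directions cannot live on a low-dimensional sphere — and (ii) the radius scale $\rho = \Theta(\sqrt{\ln n})$, with the choice $\rho^2 = (\ln n)/2$ balancing $n-1$ terms each equal to $1/n$. This is also the moral of the theorem: it shows that the query restriction $\norm{x}_2 \le R$ in Theorem~\ref{thm:coreset:rbf} is unavoidable, since the isolating query $x = p_j$ has norm exactly $\rho = \sqrt{(\ln n)/2}$, so even over the ball of radius $R = \sqrt{(\ln n)/2}$ the total sensitivity is already $\Omega(n)$, matching the $e^{\Theta(R^2)}$ growth of the upper-bound sensitivities used in Algorithm~\ref{alg:mainAlg}.

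If instead one wishes to keep the ambient dimension fixed, I would place the $n$ points equally spaced on a sphere of fixed radius and let the query march to infinity along the direction of $p_j$. Because all points share the same norm, the ratio collapses to a softmax $\exp\term{2 p_j^T x}/\sum_i \exp\term{2 p_i^T x}$, which tends to $1$ as $\norm{x}_2 \to \infty$ since $p_j$ is the unique maximizer of $p_i^T x$ in that direction. This again yields $s(p_j) \to 1$ and total sensitivity $n$, at the price of using unbounded queries rather than the sharp finite radius threshold above.
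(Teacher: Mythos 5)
Your proof is correct, and its engine is exactly the paper's: arrange the points so that every pairwise squared distance is at least $\ln n$, probe the sensitivity of each $p_j$ with the query $x = p_j$, so the numerator is $1$ while each of the remaining $n-1$ denominator terms is at most $e^{-\ln n} = 1/n$, yielding $s(p_j) \geq \frac{1}{1 + (n-1)/n} \geq \frac{1}{2}$ and hence total sensitivity $\Omega(n)$. Where you genuinely differ is the point configuration. The paper fixes $d \geq 3$ and places the $n$ points evenly on a circle of radius $\sqrt{\frac{\ln n}{2\cos\left(\pi/n\right)}}$, asserting $\min_{q \in P \setminus \br{p}} \norm{p-q}_2 = \sqrt{\ln n}$; your main construction instead takes $d = n$ with $p_i = \sqrt{(\ln n)/2}\, e_i$, where the separation $\norm{p_i - p_j}_2^2 = \ln n$ is exact by orthogonality. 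The trade-off cuts both ways. Your primary construction needs ambient dimension growing with $n$, which is admissible under the theorem's unquantified $d$ but weaker than a fixed-dimension lower bound; however, it is airtight, whereas the paper's radius does not actually deliver the claimed separation: adjacent chords on a circle of radius $r$ have length $2r\sin(\pi/n)$, so with $r = \Theta(\sqrt{\ln n})$ the minimum pairwise distance tends to $0$, and one would need $r = \frac{\sqrt{\ln n}}{2\sin(\pi/n)} = \Theta\term{n\sqrt{\ln n}}$ to fit $n$ points at distance $\sqrt{\ln n}$ on a circle. Your closing fixed-dimension variant (equal-norm points, query $x = t p_j$ with $t \to \infty$, so the ratio collapses to a softmax tending to $1$) cleanly recovers the fixed-$d$ conclusion at the price of unbounded queries, which the definition $\sup_{x \in \REAL^d}$ used in the paper's lower bound permits; together with your observation that the isolating query has norm $\Theta(\sqrt{\ln n})$, this matches and even sharpens the paper's moral that the restriction $\norm{x}_2 \leq R$ in Theorem~\ref{thm:coreset:rbf} is what makes small coresets possible.
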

\begin{proof}
Let $d \geq 3$ and let $P \subseteq \REAL^d$ be a set of $n$ points distributed evenly on a $2$ dimensional sphere of radius $\sqrt{\frac{\ln{n}}{2\cos{\term{\frac{\pi}{n}}}}}$. In other words, using the law of cosines, every $p \in P$, $\sqrt{\ln{n}} = \min_{q \in P \setminus \br{p}} \norm{p - q}_2$; see Figure~\ref{fig:lower_bound}. 
Observe that for every $p \in P$,
\begin{equation}
\label{eq:lower_bound_sens}
\begin{split}
&s(p) := \max_{x \in \REAL^d} \frac{\rbf{p}}{\sum\limits_{q \in P} \rbf{q}} \geq \frac{\rbf[p]{p}}{\sum\limits_{q \in P} \rbf[q]{p}} \\
&= \frac{1}{1 + \sum\limits_{q \in P \setminus \br{p}} \rbf[q]{p}} \geq \frac{1}{1 + \sum\limits_{q \in P \setminus \br{p}} \frac{1}{n}} \geq \frac{1}{2},
\end{split}
\end{equation}
where the first equality holds by definition of the sensitivity, the first inequality and second equality hold trivially, the second inequality follows from the assumption that $\sqrt{\ln{n}} \leq \min_{q \in P \setminus \br{p}} \norm{p - q}_2$, and finally the last inequality holds since 
$\sum_{q \in P \setminus \br{p}} \frac{1}{n} \leq 1$.
\end{proof}

\subsubsection{Reasonable assumptions lead to existence of coresets}
\label{sec:analysisCoresets}

Unfortunately, it is not immediately straightforward to bound the sensitivities of either the Gaussian loss function or the Laplacian loss function. Therefore, we first require the following structural properties in order to relate the Gaussian and Laplacian loss functions into more manageable quantities. We shall ultimately relate the function $e^{-\abs{p^Tx}}$ to both the Gaussian and Laplacian loss functions. 
Thus, we first relate the function $e^{-\abs{p^Tx}}$ to the function $|p^Tx|+1$.
\begin{restatable}{claim}{boundexp}
\label{clm:bound_exp_by_l1}
Let $p \in \REAL^d$ such that $\norm{p}_2 \leq 1$, and let $R > 0$ be positive real number. Then for every $x \in \br{x \in \REAL^d \middle| \norm{x}_2 \leq R}$, 
$
\frac{1}{{e^R}\term{1 + R}} \term{1 + \abs{p^Tx}} \leq e^{-\abs{p^Tx}} \leq \abs{p^Tx} + 1.
$
\end{restatable}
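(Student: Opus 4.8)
The plan is to reduce the two-sided bound to a one-dimensional inequality in the scalar $t := \abs{p^Tx}$. By Cauchy--Schwarz together with the hypotheses $\norm{p}_2 \le 1$ and $\norm{x}_2 \le R$, we have $0 \le t = \abs{p^Tx} \le \norm{p}_2 \norm{x}_2 \le R$. Hence it suffices to establish, for every $t \in [0,R]$,
$$\frac{1+t}{e^R\term{1+R}} \le e^{-t} \le 1 + t,$$
since substituting $t = \abs{p^Tx}$ recovers exactly the claimed chain of inequalities. The whole point of the reduction is that the dependence on $p$ and $x$ enters only through the magnitude $t$, and the constraint $\norm{p}_2\le 1$ is what guarantees $t \le R$ rather than $t \le \norm{x}_2$.

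For the upper bound this is immediate: since $t \ge 0$ we have $e^{-t} \le 1 \le 1+t$, with no appeal to $R$ needed at all.

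For the lower bound I would clear denominators, rewriting $\frac{1+t}{e^R\term{1+R}} \le e^{-t}$ in the equivalent form $1 + t \le \term{1+R} e^{R-t}$. Now I would bound the two factors on the right separately using the range of $t$: from $t \le R$ we get $1 + t \le 1 + R$, and from $R - t \ge 0$ we get $e^{R-t} \ge 1$. Combining these gives $1 + t \le 1+R \le \term{1+R}e^{R-t}$, which is the desired inequality.

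There is no real obstacle here; the only thing to watch is that the constant $\frac{1}{e^R\term{1+R}}$ must be uniform over all admissible $p$ and $x$. This is exactly what the crude worst-case estimate $t \le R$ delivers: both the linear factor $1+t$ and the exponential factor $e^{-t}$ are monotone in $t$, so their extremal behaviour over $[0,R]$ is governed by the endpoint $t = R$, which is precisely where the factor $e^R\term{1+R}$ originates. One could sharpen the constant by optimizing $\sup_{t\in[0,R]}\frac{1+t}{e^t}$ directly, but the endpoint bound suffices and keeps the constant in the clean closed form used downstream.
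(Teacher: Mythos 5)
Your proposal is correct and is essentially the paper's own argument: the paper's one-line chain $e^{-\abs{p^Tx}} \geq e^{-R} \geq \frac{1+\abs{p^Tx}}{(1+R)e^R}$ uses exactly your two ingredients ($\abs{p^Tx}\le R$ via Cauchy--Schwarz, then monotonicity of $e^{-t}$ and of $1+t$ on $[0,R]$), just written without clearing denominators, and it likewise treats the upper bound $e^{-t}\le 1\le 1+t$ as immediate. Your write-up is a slightly more explicit rearrangement of the same proof, so there is nothing to flag.
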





In what follows, we provide the analysis of coreset construction for the RBF and Laplacian loss functions, considering an input set of points lying in the unit ball. We refer the reader to the supplementary material for generalization of our approaches towards general input set of points.


\begin{restatable}[Coreset for \emph{RBF}]{theorem}{thmcoresetrbf}
\label{thm:coreset:rbf}
Let $R \geq 1$ be a positive real number, $X  = \br{x \in \REAL^d \middle| \norm{x}_2 \leq R}$, and let $\eps,\delta \in (0,1)$.
Let $\term{P,w,X,f}$ be query space as in Definition~\ref{def:querySpace} such that every $p \in P$ satisfies $\norm{p}_2 \leq 1$. For every $x \in X$ and $p \in P$, let $f(p,x) := \rho\term{\norm{p-x}_2}$. Let $(S,v)$ be a call to $\coreset(P,w,R,m)$ where $S \subseteq P$ and $v : S \to [0,\infty)$. Then $(S,v)$ $\eps$-coreset of $(P,w)$ with probability at least $1-\delta$, if {$m= O\term{\frac{e^{12R^2}R^2d^{1.5}}{\eps^2}\term{R^2 + \log{d} + \log{\frac{1}{\delta}}}}$}.
\end{restatable}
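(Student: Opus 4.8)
The plan is to reduce the RBF loss to the $\ell_1$-regression machinery already available, via the lifting performed in Line~\ref{algLine:manipulate} of Algorithm~\ref{alg:mainAlg}, and then invoke the generic sensitivity-sampling guarantee. First I would observe that for the lifted point $q_p = [\norm{p}_2^2, -2p^T, 1]^T$ and the lifted query $y = [1, x^T, \norm{x}_2^2]^T$ one has $q_p^T y = \norm{p}_2^2 - 2p^Tx + \norm{x}_2^2 = \norm{p-x}_2^2 \ge 0$, so that $f(p,x) = \rho(\norm{p-x}_2) = e^{-\norm{p-x}_2^2} = e^{-\abs{q_p^T y}}$. This identity is exactly what lets the whole problem be phrased in terms of the $\abs{\cdot}$ query space on $P' = \{q_p : p \in P\}$, for which Definition~\ref{def:l1svd} and Lemma~\ref{lem:sens_bound} apply; note that the nonnegativity of $q_p^Ty$ is what makes the absolute value cost the natural target.

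Second, I would control the distortion incurred by passing from $e^{-\abs{q_p^Ty}}$ to the near-linear surrogate $1 + \abs{q_p^Ty}$. Since $\norm{p}_2 \le 1$ and $\norm{x}_2 \le R$ with $R \ge 1$, the exponent satisfies $\abs{q_p^Ty} = \norm{p-x}_2^2 \le (1+R)^2 \le 4R^2$; applying a rescaling of Claim~\ref{clm:bound_exp_by_l1} on the interval $[0,4R^2]$ gives $\frac{1+\abs{q_p^Ty}}{e^{4R^2}(1+4R^2)} \le e^{-\abs{q_p^Ty}} \le 1 + \abs{q_p^Ty}$. Plugging the upper bound into the numerator of the sensitivity and the lower bound into the denominator produces a multiplicative distortion of order $e^{O(R^2)}(1 + O(R^2))$, which is the prefactor $e^{12R^2}(1 + 8R^2)$ appearing in Line~\ref{algLine:sensBound}.

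Third, I would bound the sensitivity of the surrogate $g(p,x) = 1 + \abs{q_p^T y}$. Writing numerator and denominator as sums of a constant term and an $\abs{\cdot}$ term, and using that the sensitivity of a sum is at most the sum of the sensitivities, I get $\sup_x \frac{w(p)(1+\abs{q_p^Ty})}{\sum_q w(q)(1+\abs{q_q^Ty})} \le \frac{w(p)}{\sum_q w(q)} + \sup_x \frac{w(p)\abs{q_p^Ty}}{\sum_q w(q)\abs{q_q^Ty}}$. The first summand is the uniform sensitivity; the second is precisely the $\abs{\cdot}$-sensitivity of $q_p$, bounded by $w(p)\norm{U(q_p)}_1$ through Lemma~\ref{lem:sens_bound}, where $(U,D,V)$ is the $f$-SVD from Line~\ref{algLine:l1svd}. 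Multiplying by the distortion factor reproduces $s(p)$ of Line~\ref{algLine:sensBound}, and summing over $p$ — using $\sum_p \frac{w(p)}{\sum_q w(q)} = 1$ and the total-sensitivity bound $\sum_p \norm{U(q_p)}_1 \le (d+2)^{1.5}$ of Lemma~\ref{lem:sens_bound} — yields a total sensitivity $t = O(e^{12R^2} R^2 d^{1.5})$.

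Finally, I would feed $t$ and the VC dimension $d'$ of the quadruple (Line~\ref{algLine:3}) into the generic sensitivity-sampling theorem (Theorem~\ref{thm:coreset}): sampling $m$ points with probability $s(p)/t$ and reweighting by $t/(s(q)m)$ gives an $\eps$-coreset with probability $1-\delta$ provided $m = \Omega\!\left(\frac{t}{\eps^2}(\log t + \log\frac1\delta)\right)$, and since $\log t = O(R^2 + \log d)$ this matches the stated bound. I expect the main obstacle to be the second step: applying the sandwich bound correctly in the lifted coordinates (where $q_p$ no longer lies in the unit ball and $y$ has norm as large as $\Theta(R^2)$) and tracking the $R$-dependent distortion tightly enough that it enters only as a multiplicative prefactor on the sensitivities rather than inflating the $\eps$ or dimension dependence; everything downstream is then a mechanical assembly of the two already-proven lemmas.
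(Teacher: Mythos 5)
Your proposal is correct and follows essentially the same route as the paper's proof: the same lifting $q_p=[\|p\|_2^2,-2p^T,1]^T$ with $q_p^Ty=\|p-x\|_2^2$, the exponential-to-linear sandwich of Claim~\ref{clm:bound_exp_by_l1}, the numerator/denominator split of Claim~\ref{clm:frac}, the $\ell_1$-sensitivity bound of Lemma~\ref{lem:sens_bound} in dimension $d+2$, and final assembly via Theorem~\ref{thm:coreset} using $\log t=O(R^2+\log d)$. Your only (harmless, arguably cleaner) deviation is applying the sandwich with the uniform distortion $e^{4R^2}(1+4R^2)$ obtained from $\|p-x\|_2^2\le 4R^2$, whereas the paper's Lemma~\ref{lem:RBFsensbound} uses the per-point factor $e^{3R^2\|q_p\|_2}(1+3R^2\|q_p\|_2)$ folded into the auxiliary weights $u(p)$; since every $p$ lies in the unit ball, both choices give the same $O(e^{O(R^2)}R^2d^{1.5})$ total sensitivity, and a uniform prefactor cancels in the sampling probabilities $s(p)/t$, so the stated sample-size bound follows identically.
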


\textbf{Coreset for Laplacian loss function. }
In what follows, we provide a coreset for the Laplacian loss function. Intuitively speaking, leveraging the properties of the Laplacian loss function, we were able to construct a coreset that holds for every vector $x \in \REAL^d$ unlike the \emph{RBF} case where the coreset holds for a ball of radius $R$. We emphasize that the reason for this is due to the fact that the Laplacian loss function is less sensitive than the \emph{RBF}.

\begin{restatable}[Coreset for the Laplacian loss function]{theorem}{thmcoresetlaplacian}
\label{thm:coreset:laplacian}
Let $\term{P,w,\REAL^d,f}$ be query space as in Definition~\ref{def:querySpace} such that every $p \in P$ satisfies $\norm{p}_2 \leq 1$. For $x \in \REAL^d$ and $p \in P$, let $f(p,x) := e^{-\norm{p-x}_2}$. Let $\eps,\delta \in (0,1)$. Then there exists an algorithm which given $P,w,\eps,\delta$ return a weighted set $(S,v)$ where $S \subseteq P$ of size $O\term{\frac{\sqrt{n}d^{1.25}}{\eps^2}\term{\log{n} + \log{d} + \log{\frac{1}{\delta}}}}$ and a weight function $v : S \to [0, \infty)$ such that $(S,v)$ is an $\eps$-coreset of $(P,w)$ with probability at least $1-\delta$. 
\end{restatable}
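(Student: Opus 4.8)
The plan is to invoke the sensitivity sampling framework (Theorem~\ref{thm:coreset}): once we (i) exhibit an upper bound $\hat{s}(p)\ge s(p)$ on the sensitivity of each point and (ii) bound the pseudo-dimension $d'$ of the induced query space, then sampling $m$ points with probability proportional to $\hat{s}(p)$ and reweighting yields an $\eps$-coreset of size $\tilde{O}\term{\frac{t\,d'}{\eps^2}}$ with probability $1-\delta$, where $t=\sum_{p\in P}\hat{s}(p)$ is the total sensitivity. Thus the theorem reduces to bounding $t$ and $d'$ for the Laplacian loss $f(p,x)=e^{-\norm{p-x}_2}$ over all $x\in\REAL^d$. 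For the pseudo-dimension I would note that each superlevel set $\br{p : f(p,x)\ge r}=\br{p:\norm{p-x}_2\le \ln(1/r)}$ is a Euclidean ball; since the VC dimension of balls in $\REAL^d$ is $O(d)$, the induced range space has $d'=O(d)$, which supplies the polynomial-in-$d$ factors.

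The heart of the argument is the sensitivity bound, and the decisive structural fact — the one the paper alludes to when it says the Laplacian is ``less sensitive'' than the \emph{RBF} — is that $\norm{p-x}_2$ has bounded oscillation over the unit ball: for all $p,q\in P$ and all $x\in\REAL^d$, $\abs{\norm{q-x}_2-\norm{p-x}_2}\le\norm{p-q}_2\le 2$. Hence $e^{-\norm{q-x}_2}\ge e^{-2}e^{-\norm{p-x}_2}$ termwise, so $s(p)=\sup_{x}\frac{w(p)e^{-\norm{p-x}_2}}{\sum_{q}w(q)e^{-\norm{q-x}_2}}\le \frac{e^{2}w(p)}{\sum_{q}w(q)}$ uniformly over every $x\in\REAL^d$. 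This is exactly why no radius restriction $\norm{x}_2\le R$ is needed, in contrast to Theorem~\ref{thm:coreset:rbf}: for the \emph{RBF} loss the analogous gap $\norm{q-x}_2^2-\norm{p-x}_2^2$ grows linearly in $\norm{x}_2$ and cannot be controlled, whereas for the Laplacian it is capped by the diameter.

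To recover the stated bound through the paper's unified pipeline (Algorithm~\ref{alg:mainAlg}) rather than this crude diameter estimate, I would instead lift each point to $q_p=\term{\norm{p}_2^2,-2p^T,1}^T$ so that $q_p^T y=\norm{p-x}_2^2$ for $y=\term{1,x^T,\norm{x}_2^2}^T$, relate $e^{-\norm{p-x}_2}=e^{-\sqrt{q_p^T y}}$ to the $\ell_1$ quantity $\norm{U(q_p)}_1$ by a sandwiching in the spirit of Claim~\ref{clm:bound_exp_by_l1}, and then apply the $\ell_1$-SVD sensitivity bound of Lemma~\ref{lem:sens_bound} (Definition~\ref{def:l1svd}). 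Because the exponent now carries a square root, the per-point bound comes out proportional to $\sqrt{\norm{U(q_p)}_1}$; summing over the $n$ points and invoking Cauchy--Schwarz together with the total-$\ell_1$-sensitivity bound $\sum_p\norm{U(q_p)}_1\le d^{1.5}$ gives $t=O\term{\sqrt{n}\,\mathrm{poly}(d)}$, which is the origin of both the $\sqrt{n}$ and the $d^{1.25}$ factors. Plugging this $t$ and $d'=O(d)$ into the sampling guarantee produces the claimed coreset of size $O\term{\frac{\sqrt{n}\,d^{1.25}}{\eps^2}\term{\log n+\log d+\log\frac{1}{\delta}}}$.

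I expect the main obstacle to be precisely this sensitivity bound for a non-convex, non-bilinear exponential kernel. The enabling idea is the bounded-oscillation property above, which tames arbitrarily far queries and is what frees the result from any radius assumption; the technical price, if one routes through the $\ell_1$-SVD machinery to reuse the \emph{RBF} analysis, is the square root in $e^{-\sqrt{\cdot}}$, which forces the Cauchy--Schwarz step and hence the $\sqrt{n}$ dependence. It is worth flagging that the direct diameter argument already yields $t=O(1)$ and a coreset of size $\tilde{O}\term{d/\eps^2}$, comfortably within the stated bound, so the $\sqrt{n}$ factor appears to be an artifact of the unified treatment rather than an intrinsic barrier for the Laplacian loss on the unit ball.
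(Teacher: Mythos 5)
Your proposal is correct, and it in fact contains two complete routes. Your ``unified pipeline'' paragraph reproduces the paper's actual proof (Lemma~\ref{lem:laplacianSensBound}) almost step for step: lift $p$ to $q_p=\term{\norm{p}_2^2,-2p^T,1}^T$ so that $\norm{p-x}_2=\sqrt{\abs{q_p^Ty}}$, sandwich $e^{-\sqrt{\abs{q_p^Ty}}}$ via Claim~\ref{clm:bound_exp_by_l1}, split with Claim~\ref{clm:frac}, pass to squared weights so that Lemma~\ref{lem:sens_bound} applies, and pay a Cauchy--Schwarz step $\sum_p\sqrt{\norm{U(q_p)}_1}\le\sqrt{n}\term{\sum_p\norm{U(q_p)}_1}^{1/2}$, which is exactly where the paper's $\sqrt{n}$ and $d^{1.25}$ factors come from. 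The genuine difference is your first argument: the paper invokes the triangle inequality only for queries \emph{outside} the unit ball (the term \eqref{eq:sens_4_lap} in its case split over $x\in X$ versus $x\in\REAL^d\setminus X$), whereas you observe that the bounded-oscillation bound $\abs{\norm{p-x}_2-\norm{q-x}_2}\le\norm{p-q}_2\le 2$ holds for \emph{every} $x\in\REAL^d$, giving the uniform bound $s(p)\le e^{2}w(p)/\sum_{q\in P}w(q)$ and total sensitivity $t\le e^{2}$. That argument is airtight, and plugged into Theorem~\ref{thm:coreset} with your (also correct) $O(d)$ bound on the dimension of the induced ranges it yields a coreset of size $O\term{\frac{1}{\eps^2}\term{d+\log\frac{1}{\delta}}}$, comfortably inside the stated bound, so the theorem follows and your diagnosis is right: the $\sqrt{n}$ is an artifact of forcing the Laplacian through the $\ell_1$-SVD machinery rather than an intrinsic barrier. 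What each approach buys: the paper's route reuses the RBF analysis verbatim and produces data-dependent per-point sensitivities (via $\norm{U(q_p)}_1$) that serve as non-uniform importance scores in Algorithm~\ref{alg:mainAlg}, while your diameter argument is more elementary, gives a strictly better worst-case size, and makes transparent why no radius restriction on $x$ is needed --- at the price of collapsing the sampling distribution to one proportional to $w$, which carries no structural information about the data. Two minor caveats, neither affecting correctness: the weighted ranges $\br{p\in P\mid w(p)f(p,x)\le r}$ are additively offset balls rather than balls, though they still linearize to a range space of dimension $O(d)$ as you claim; and the paper's stated size silently drops the $d'$ multiplier that Theorem~\ref{thm:coreset} places in front of $\log t$, an inconsistency internal to the paper and orthogonal to your argument.
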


\section{Radial Basis Function Networks}
\label{sec:RBFNN}
In this section, we consider coresets for \emph{RBFNNs}. Consider an \emph{RBFNN} with $L$ neurons in the hidden layer and a single output neuron. First note that the hidden layer uses radial basis functions as activation functions so that the output is a scalar function of the input layer, $\phi:\mathbb{R}^d\to\mathbb{R}$ defined by $\phi(x)=\sum_{i=1}^L\alpha_i\rho(\|x-c^{(i)}\|_2)$, where $c^{(i)}\in\mathbb{R}^n$ for each $i\in[d]$. 

For an input dataset $P$ and a corresponding desired output function $y : P \to \REAL$, RBFNNs aim to optimize $\sum\limits_{p \in P} \term{y(p) - \sum_{i=1}^L\alpha_i e^{-\norm{p-c^{(i)}}^2_2}}^2.$  
Expanding the cost function, we obtain that RBFNNs aim to optimize 
\begin{equation}
\label{eq:rbfnnCostExpanded}
\begin{split}
\sum\limits_{p \in P} y(p)^2 &-2 \sum_{i=1}^L \alpha_i \overbrace{\term{\sum\limits_{p \in P} y(p) e^{-\norm{p-c^{(i)}}^2_2}}}^{\alpha} \\
&+ \underbrace{\sum\limits_{p \in P}\term{\sum_{i=1}^L\alpha_i e^{-\norm{p-c^{(i)}}^2_2}}^2}_{\beta}.    
\end{split}
\end{equation}

\textbf{Bounding the $\alpha$ term in~\eqref{eq:rbfnnCostExpanded}.}
We first define for every $x \in \REAL^d$:
\begin{align*}
\phi^+(x) &= \sum_{p \in P,y(p)>0}y(p)e^{-\norm{p - x}_2^2}\\
\phi^-(x) &=\sum_{p \in P,y(p)<0} \abs{y(p)}e^{-\norm{p - x}_2^2}.
\end{align*}

Observe that $\sum_{p\in P}y(p)\rho(\|p-c^{(i)}\|_2) =\phi^+\term{c^{(i)}}-\phi^-\term{c^{(i)}}$. 
Thus the $\alpha$ term in~\eqref{eq:rbfnnCostExpanded} can be approximated using the following.

\begin{restatable}{theorem}{thmcoresetrbfn}
\label{thm:coreset:rbfn}
There exists an algorithm that samples  $O\term{\frac{e^{8R^2}R^2d^{1.5}}{\eps^2}\term{R^2 + \log{d} + \log{\frac{2}{\delta}}}}$ points to form weighted sets $\term{S_1,w_1}$ and $\term{S_2,w_2}$ such that with probability at least $1-2\delta$,
\[
\frac{\abs{\sum\limits_{p \in P} y(p)\phi(p) - \term{\sum\limits_{\substack{i\in[L]\\\alpha_i>0}}\alpha_i \gamma_{S_1} + \sum\limits_{\substack{j \in [L]\\ \alpha_j<0}} \alpha_j \gamma_{S_2}}}}{\sum\limits_{\substack{i\in[L]}} \abs{\alpha_i}\term{\phi^+\term{c^{(i)}}+\phi^-\term{c^{(i)}}}}\le\eps,
\]
where $\gamma_{S_1} := \sum\limits_{p\in S_1}w_1(p)e^{-\norm{p-c^{(i)}}_2^2}$ and $\gamma_{S_2} := \sum\limits_{q\in S_2} w_2(q)e^{-\norm{q - c^{(j)}}_2^2}$.
\end{restatable}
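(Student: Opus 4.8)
The plan is to reduce the claim to two independent invocations of the \emph{RBF} coreset guarantee of Theorem~\ref{thm:coreset:rbf}, applied separately to the positively- and negatively-labeled points, and then to recombine the two approximations through the triangle inequality.

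First I would record the exact algebraic identity that drives everything. Expanding $\phi(p)=\sum_{i=1}^L \alpha_i e^{-\norm{p-c^{(i)}}_2^2}$ inside $\sum_{p\in P} y(p)\phi(p)$ and exchanging the order of summation gives $\sum_{p\in P} y(p)\phi(p)=\sum_{i=1}^L \alpha_i \sum_{p\in P} y(p)e^{-\norm{p-c^{(i)}}_2^2}$, and by the observation stated just before the theorem each inner sum equals $\phi^+(c^{(i)})-\phi^-(c^{(i)})$. Hence the target quantity is exactly $\sum_{i=1}^L \alpha_i\term{\phi^+(c^{(i)})-\phi^-(c^{(i)})}$, and it remains only to approximate the two sign-definite functions $\phi^+$ and $\phi^-$ evaluated at every center $c^{(i)}$.

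Next I would observe that $\phi^+(x)=\sum_{p:y(p)>0} y(p)e^{-\norm{p-x}_2^2}$ is precisely the weighted \emph{RBF} loss of the point set $P^+=\br{p\in P\mid y(p)>0}$ under the nonnegative weight function $y$, and similarly $\phi^-$ is the weighted \emph{RBF} loss of $P^-=\br{p\in P\mid y(p)<0}$ under $\abs{y}$. Applying $\coreset$ (Algorithm~\ref{alg:mainAlg}) once to $(P^+,y)$ and once to $(P^-,\abs{y})$ produces $(S_1,w_1)$ and $(S_2,w_2)$; by Theorem~\ref{thm:coreset:rbf}, and treating the centers (which lie in the query ball of radius $R$) as queries, the estimates $\hat\phi^+(c^{(i)}):=\gamma_{S_1}$ and $\hat\phi^-(c^{(i)}):=\gamma_{S_2}$ satisfy $\abs{\hat\phi^+(c^{(i)})-\phi^+(c^{(i)})}\le\eps\,\phi^+(c^{(i)})$ and $\abs{\hat\phi^-(c^{(i)})-\phi^-(c^{(i)})}\le\eps\,\phi^-(c^{(i)})$ simultaneously for all $i\in[L]$, each with probability at least $1-\delta$. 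A union bound over the two events gives overall probability $1-2\delta$, and the stated sample size follows from the bound of Theorem~\ref{thm:coreset:rbf} applied to each sub-instance, the $\log\frac{2}{\delta}$ absorbing the union-bound factor.

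Finally I would combine the estimates: using the approximant $\sum_{i=1}^L \alpha_i\term{\hat\phi^+(c^{(i)})-\hat\phi^-(c^{(i)})}$ and subtracting it from the identity of the second step, the error rearranges into $\sum_i \alpha_i\bigl[(\phi^+(c^{(i)})-\hat\phi^+(c^{(i)}))-(\phi^-(c^{(i)})-\hat\phi^-(c^{(i)}))\bigr]$; bounding by the triangle inequality, pulling out $\abs{\alpha_i}$, and applying the two per-center guarantees yields the upper bound $\eps\sum_{i=1}^L \abs{\alpha_i}\term{\phi^+(c^{(i)})+\phi^-(c^{(i)})}$, which is exactly the claimed normalized bound. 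The main obstacle, and the reason the error is normalized by $\sum_i\abs{\alpha_i}\term{\phi^++\phi^-}$ rather than by $\abs{\sum_p y(p)\phi(p)}$, is the mixed-sign structure: since both $y(p)$ and $\alpha_i$ may be negative, the raw sum $\sum_p y(p)e^{-\norm{p-c}_2^2}$ can vanish and admits no multiplicative coreset, forcing the split into the sign-definite pieces $\phi^+,\phi^-$ and the weaker normalization. The only remaining delicate point is re-tracking the sensitivity constants for this application, in particular matching the $e^{8R^2}$ factor (rather than the $e^{12R^2}$ of Theorem~\ref{thm:coreset:rbf}) on the separate positive and negative sub-instances.
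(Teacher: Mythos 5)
Your proposal is correct and follows essentially the same route as the paper: the paper likewise constructs strong \emph{RBF} coresets for the sign-definite pieces $\phi^+$ and $\phi^-$ via Theorem~\ref{thm:coreset:rbf}, combines them per center $c^{(i)}$ with a union bound and the triangle inequality (this is its intermediate Theorem~\ref{thm:coreset:activation}), and then sums over $i$ by splitting on the sign of $\alpha_i$, yielding exactly your normalization $\eps\sum_{i}\abs{\alpha_i}\term{\phi^+\term{c^{(i)}}+\phi^-\term{c^{(i)}}}$. The $e^{8R^2}$ versus $e^{12R^2}$ constant mismatch you flag at the end is not a gap in your argument but an inconsistency present in the paper's own proof, which quotes both exponents for the two sub-instances.
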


\textbf{Bounding the $\beta$ term in~\eqref{eq:rbfnnCostExpanded}.}
By Cauchy's inequality, it holds that 
\begin{equation*}
\begin{split}
\sum\limits_{p \in P}\term{\sum_{i=1}^L\alpha_i e^{-\norm{p-c^{(i)}}^2_2}}^2 &\leq L\sum\limits_{p \in P} \sum_{i=1}^L\alpha_i^2 e^{-2\norm{p-c^{(i)}}^2_2} \\
&= L \sum_{i=1}^L\alpha_i^2 \sum\limits_{p \in P} e^{-2\norm{p-c^{(i)}}^2_2},  
\end{split}
\end{equation*}
where the equality holds by simple rearrangement.

Using Theorem~\ref{thm:coreset:rbf}, we can approximate the upper bound on $\beta$ with an approximation of $L(1+\eps)$. 
However, if for every $i \in [L]$ it holds that $\alpha_i \geq 0$, then we also have the lower bound
$
\sum_{i=1}^L\alpha_i^2 \sum\limits_{p \in P} e^{-2\norm{p-c^{(i)}}^2_2}  \leq \sum\limits_{p \in P}\term{\sum_{i=1}^L\alpha_i e^{-\norm{p-c^{(i)}}^2_2}}^2.
$

Since we can generate a multiplicative coreset for the left-hand side of the above inequality, then we obtain also a multiplicative coreset in a sense for $\beta$ as well.

\begin{table*}[!t]
\centering
\caption{Data Selection Results for CIFAR10 using ResNet-18. }\label{table:cifar10resnet}
\adjustbox{max width=\textwidth}{
\begin{tabular}{c|c c c c|c c c c} \hline \hline
\multicolumn{1}{c|}{} & \multicolumn{4}{c|}{Top-1 Test accuracy of the Model(\%)} & \multicolumn{4}{c}{Model Training time(in hrs)} \\  \multicolumn{1}{c|}{Budget(\%)} & \multicolumn{1}{c}{5\%} & \multicolumn{1}{c}{10\%} & \multicolumn{1}{c}{20\%} & \multicolumn{1}{c|}{30\%} & \multicolumn{1}{c}{5\%} & \multicolumn{1}{c}{10\%} & \multicolumn{1}{c}{20\%} & \multicolumn{1}{c}{30\%} \\ \hline
\textsc{Full} (skyline for test accuracy)  &95.09 &95.09  &95.09  &95.09  &4.34 & 4.34 &4.34 &4.34 \\ 
\textsc{Random} (skyline for training time) &71.2 &80.8  &86.98  &87.6  &0.22 & 0.46 &0.92 &1.38\\
\hline
 \textsc{Glister} &85.5 &\textbf{91.92} &92.78 &93.63 &0.43 &0.91 &1.13 &1.46 \\
 \textsc{Craig}  &82.74 &87.49 &90.79 &92.53 &0.81 &1.08 &1.45 &2.399\\
 \textsc{CraigPB}  &83.56 &88.77 &92.24 &93.58 &0.4466 &0.70 &1.13 &2.07\\
 \textsc{GradMatch}  &86.7 &90.9 &91.67 & 91.89 &0.40 &0.84 &1.42 &1.52\\
 \textsc{GradMatchPB}  &85.4 &90.01 &93.34 &93.75 &0.36 &0.69 &1.09 &1.38\\
\hline
\textsc{RBFNN Coreset (OURS)} & \textbf{86.9} & {91.4} & \textbf{93.61} & \textbf{94.44} & \textbf{0.28} & \textbf{0.52}& \textbf{0.98} & \textbf{1.38}\\
\hline
\end{tabular}}
    \label{tab:data_sel_cifar10}
\end{table*}

\begin{table*}[!t]
\centering
\caption{Data Selection Results for CIFAR10 using ResNet-18 with warm start}\label{table:cifar10resnet_warm}
\adjustbox{max width=\textwidth}{
\begin{tabular}{c|c c c c|c c c c} \hline \hline
\multicolumn{1}{c|}{} & \multicolumn{4}{c|}{Top-1 Test accuracy of the Model(\%)} & \multicolumn{4}{c}{Model Training time(in hrs)} \\  \multicolumn{1}{c|}{Budget(\%)} & \multicolumn{1}{c}{5\%} & \multicolumn{1}{c}{10\%} & \multicolumn{1}{c}{20\%} & \multicolumn{1}{c|}{30\%} & \multicolumn{1}{c}{5\%} & \multicolumn{1}{c}{10\%} & \multicolumn{1}{c}{20\%} & \multicolumn{1}{c}{30\%} \\ \hline
\textsc{Full} (skyline for test accuracy)  &95.09 &95.09  &95.09  &95.09  &4.34 & 4.34 &4.34 &4.34 \\ 
\textsc{Random-Warm} (skyline for training time) &83.2 &87.8  &90.9  &92.6  &{0.21} & {0.42} &{0.915} &{1.376}\\\hline
 \textsc{Glister-Warm} &86.57 &91.56 &92.98 &94.09 &0.42 &0.88 &1.08 &1.40\\
 \textsc{Craig-Warm}  &84.48 &89.28 &92.01 &92.82 &0.6636 &0.91 &1.31 &2.20\\
 \textsc{CraigPB-Warm}  &86.28 &90.07 &93.06 &93.8 &0.4143 &0.647 &1.07 &2.06\\
 \textsc{GradMatch-Warm}  &{87.2} &92.15 &92.11 &92.01 &0.38 &0.73 &1.24 &1.41\\
 \textsc{GradMatchPB-Warm}  &86.37 &\textbf{92.26} &{93.59} &{94.17} & {0.32} &{0.62} &{1.05} &{1.36} \\
\hline
{\textsc{RBFNN Coreset-WARM (OURS)}} & {\textbf{87.82}} &  {{91.44}} & {\textbf{93.81}}&  {\textbf{94.6}}& \textbf{0.27} & \textbf{0.51} & \textbf{0.99} & \textbf{1.36} \\\hline
\end{tabular}}
    \label{tab:data_sel_cifar10_warm}
\end{table*}

\begin{table*}[htb!]
\centering
\caption{Data Selection Results for CIFAR100 using ResNet-18}\label{table:cifar100resnet}
\adjustbox{max width=\textwidth}{
\begin{tabular}{c|c c c c|c c c c} \hline \hline
\multicolumn{1}{c|}{} & \multicolumn{4}{c|}{Top-1 Test accuracy of the Model(\%)} & \multicolumn{4}{c}{Model Training time(in hrs)} \\  \multicolumn{1}{c|}{Budget(\%)} & \multicolumn{1}{c}{5\%} & \multicolumn{1}{c}{10\%} & \multicolumn{1}{c}{20\%} & \multicolumn{1}{c|}{30\%} & \multicolumn{1}{c}{5\%} & \multicolumn{1}{c}{10\%} & \multicolumn{1}{c}{20\%} & \multicolumn{1}{c}{30\%} \\ \hline
\textsc{Full} (skyline for test accuracy)  &75.37  &75.37 &75.37 &75.37 &4.871 &4.871 &4.871 &4.871\\ 
\textsc{Random} (skyline for training time) &19.02 &31.56 &49.6 &58.56 &0.2475 &0.4699 &{0.92} &1.453\\
\hline
\textsc{Glister} &29.94 &44.03 &61.56 &70.49 &0.3536 &0.6456 &1.11 &1.5255\\
\textsc{Craig}  &36.61 &55.19 &66.24 &70.01 &1.354 &1.785 &1.91 &2.654\\
\textsc{CraigPB}  &38.95 &54.59 &67.12 &70.61 &0.4489 &0.6564 &1.15 &1.540\\
\textsc{GradMatch}  &41.01 &59.88 &68.25 &71.5 &0.5143 &0.8114 &1.40 &2.002\\
\textsc{GradMatchPB}  &40.53 &60.39 &70.88 &72.57 &0.3797 &0.6115 &1.09 &1.56\\
\hline
\textsc{RBFNN Coreset (OURS)} & \textbf{54.17} & \textbf{64.59} & \textbf{71.17} & \textbf{73.58} & \textbf{0.346}& \textbf{0.5699}& \textbf{1.01}& \textbf{1.552}\\ 
\hline
\end{tabular}}
    \label{tab:data_sel_cifar100a}
\end{table*}

\begin{table*}[htb!]
\centering
\caption{Data Selection Results for CIFAR100 using ResNet-18 with warm start}\label{table:cifar100resnet_warm}
\adjustbox{max width=\textwidth}{
\begin{tabular}{c|c c c c|c c c c} \hline \hline
\multicolumn{1}{c|}{} & \multicolumn{4}{c|}{Top-1 Test accuracy of the Model(\%)} & \multicolumn{4}{c}{Model Training time(in hrs)} \\  \multicolumn{1}{c|}{Budget(\%)} & \multicolumn{1}{c}{5\%} & \multicolumn{1}{c}{10\%} & \multicolumn{1}{c}{20\%} & \multicolumn{1}{c|}{30\%} & \multicolumn{1}{c}{5\%} & \multicolumn{1}{c}{10\%} & \multicolumn{1}{c}{20\%} & \multicolumn{1}{c}{30\%} \\ \hline
\textsc{Full} (skyline for test accuracy)  &75.37  &75.37 &75.37 &75.37 &4.871 &4.871 &4.871 &4.871\\ 
\textsc{Random-Warm} (skyline for training time) &58.2 &65.95 &70.3 &72.4 &{0.242} &{0.468} &0.921 &{1.43}\\\hline
\textsc{Glister-Warm} &57.17 &64.95 &62.14 &72.43 &0.3185 &0.6059 &1.06 &{1.452}\\
\textsc{Craig-Warm}  &57.44 &67.3 &69.76 &72.77 &1.09 &1.48 &1.81 &2.4112\\
\textsc{CraigPB-Warm} &57.66 &67.8 &70.84 &73.79 &0.394 &0.6030 &1.10 &1.5567\\
\textsc{GradMatch-Warm}  &57.72 &68.23 &71.34 &74.06 &0.3788 &0.7165 &1.30 &1.985\\
\textsc{GradMatchPB-Warm} & 58.26 &\textbf{69.58} &\textbf{73.2} & 74.62 &\textbf{0.300} &{0.5744} &\textbf{1.01} &1.5683\\
\hline
{\textsc{RBFNN Coreset-WARM (OURS)}} & \textbf{{59.22}} & {67.8} & {72.79}& \textbf{{75.04}}& 0.352& \textbf{0.5710}& 1.03& \textbf{1.56}\\\hline
\end{tabular}}
    \label{tab:data_sel_cifar100_warm}
\end{table*}

\begin{table*}[htb!]
\centering
\caption{Data Selection Results for ImageNet2012 using ResNet-18}\label{table:imagenet2012resnet}
\adjustbox{max width=\textwidth}{
\begin{tabular}{c|c|c} \hline \hline
\multicolumn{1}{c|}{} & \multicolumn{1}{c|}{Top-1 Test accuracy of the Model(\%)} & \multicolumn{1}{c}{Model Training time(in hrs)} \\  \multicolumn{1}{c|}{Budget(\%)} & \multicolumn{1}{c|}{5\%} & \multicolumn{1}{c}{5\%} \\ \hline
\textsc{Full} (skyline for test accuracy)  & 70.36 & 276.28 \\ 
\textsc{Random} (skyline for training time) & 21.124 & {14.12} \\ \hline
\textsc{CraigPB} & 44.28 & 22.24 \\
\textsc{GradMatch} & 47.24 &  18.24\\
\textsc{GradMatchPB} & 45.15 & 16.12 \\
\hline
\textsc{RBFNN Coreset (OURS)} & \textbf{47.26} & \textbf{15.24}\\ \hline
\end{tabular}}
    \label{tab:data_sel_cifar100b}
\end{table*}

\begin{figure*}[!t]
    \centering
    \subfigure[]{\includegraphics[width=.245\linewidth]{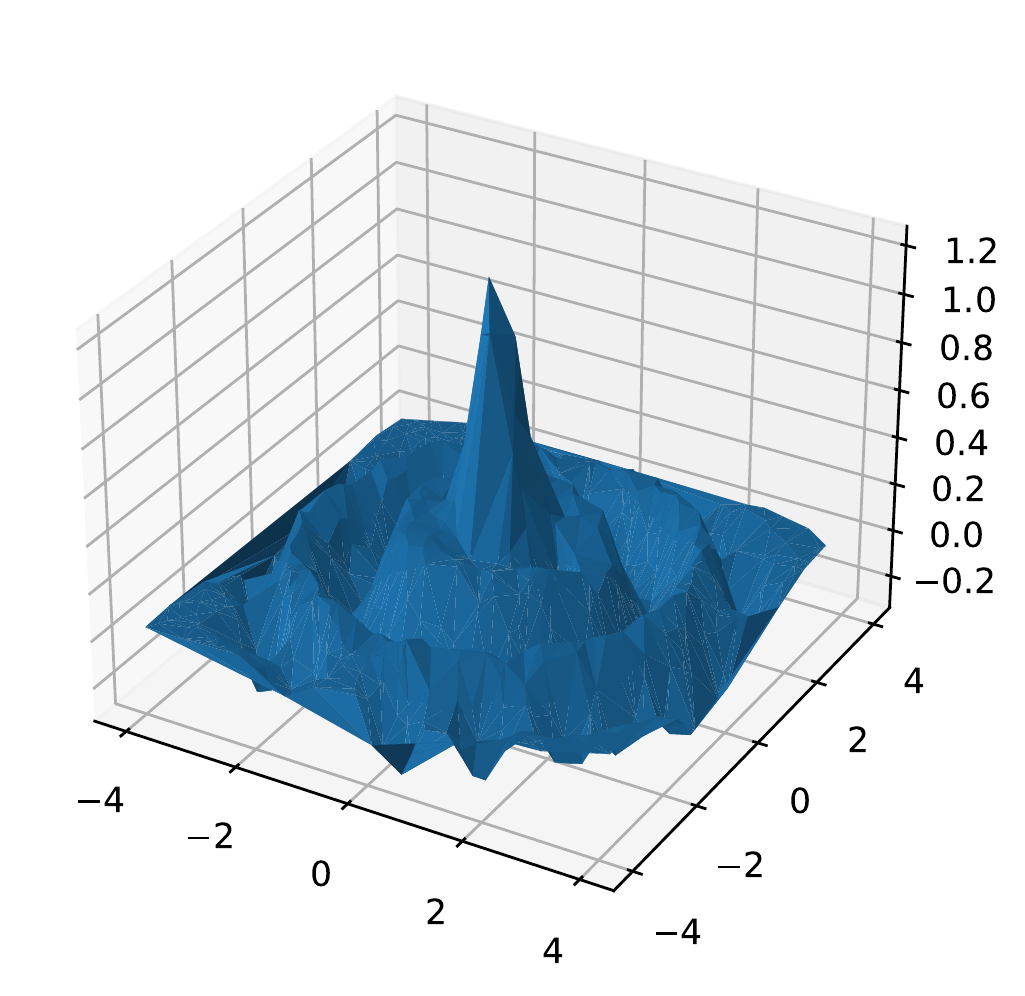}\label{fig:func_approx_entire}}
    \subfigure[]{\includegraphics[width=.245\linewidth]{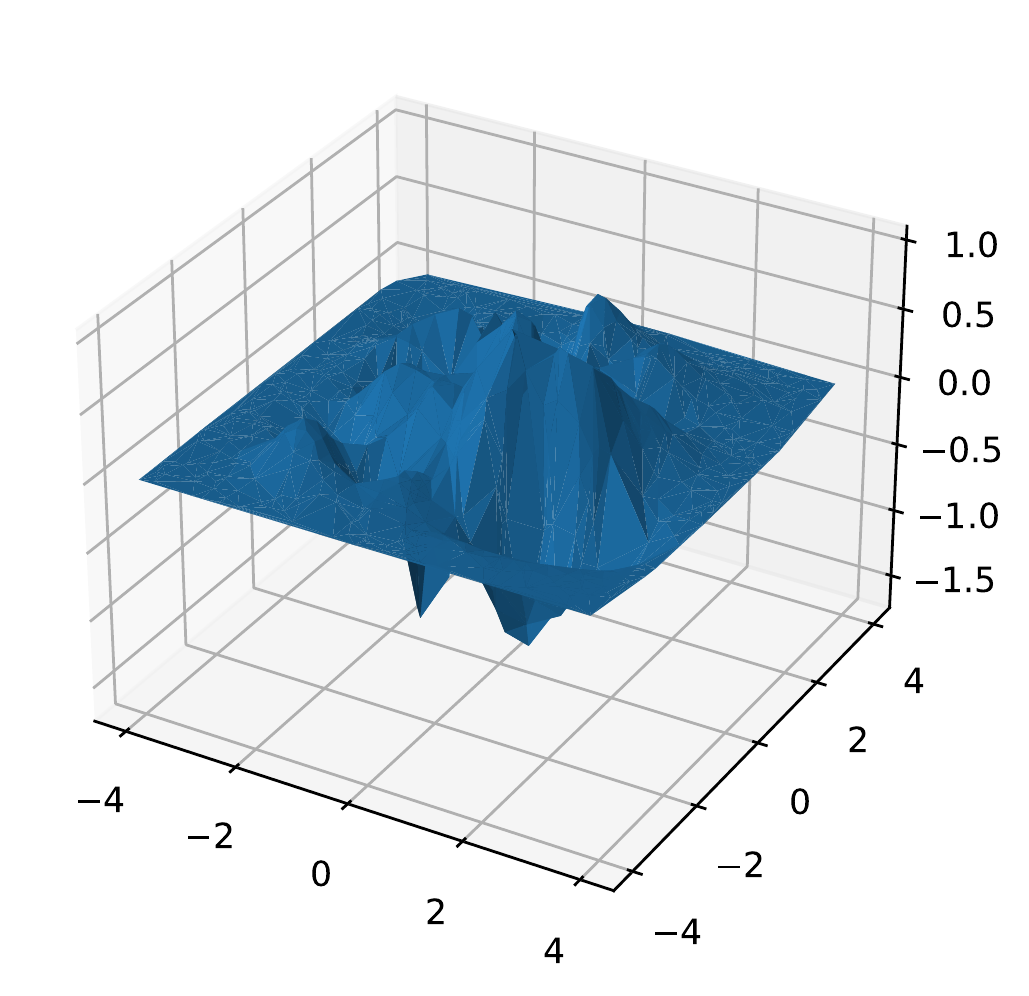}\label{fig:func_approx_uniform}}
    \subfigure[]{\includegraphics[width=.245\linewidth]{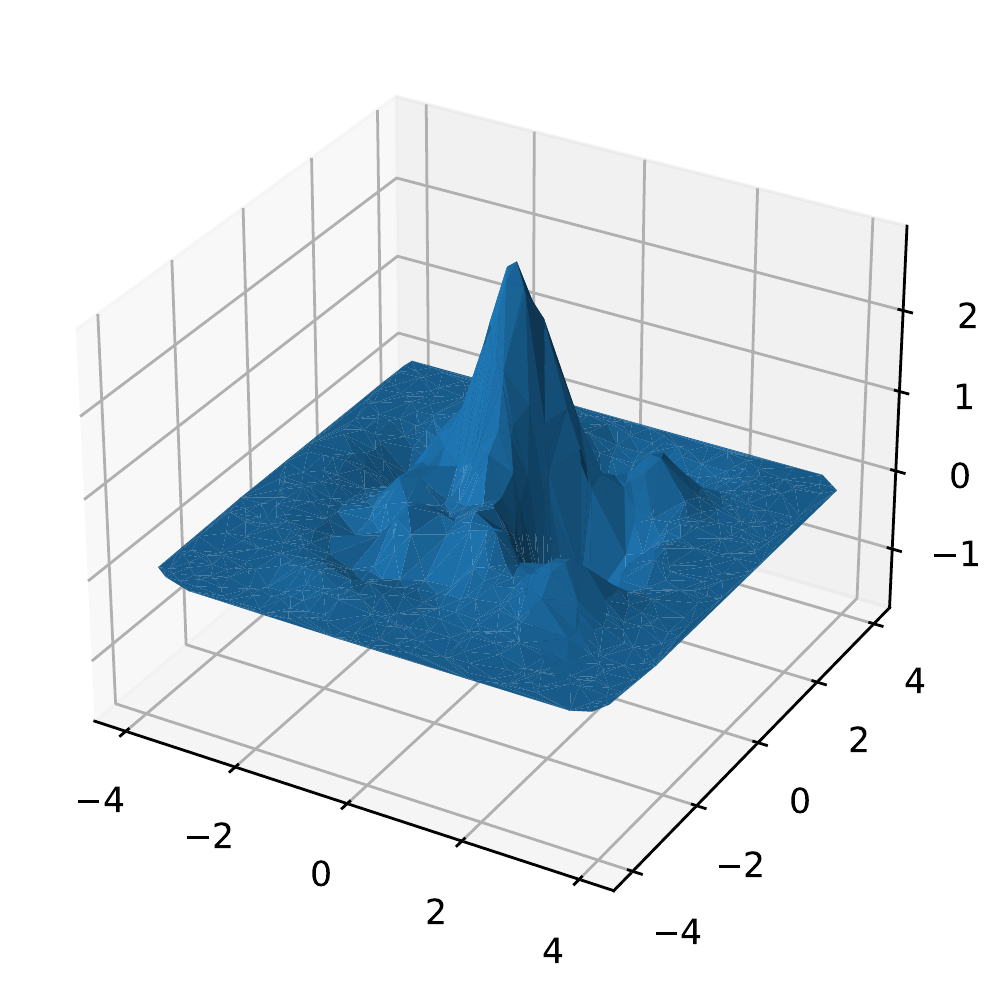}\label{fig:func_approx_us}}
    \subfigure[]{\includegraphics[width=.245\linewidth]{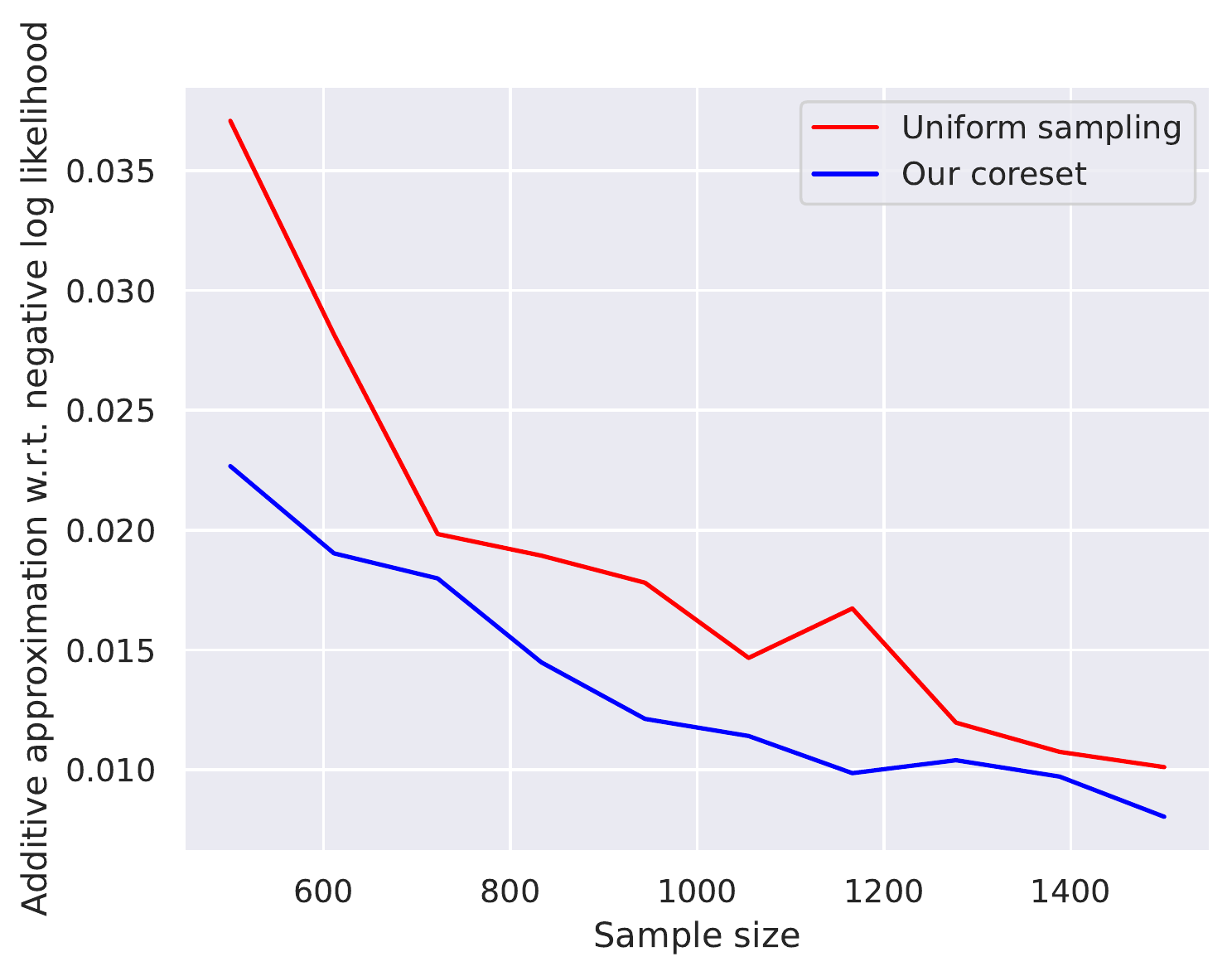}\label{fig:additive}}
    \caption{\textbf{(a) is the function we wish to approximate by training an RBFNN on}: (b) a uniformly sampled subset, and (c) our coreset.}
    \label{fig:func_aprox}
\end{figure*}

\section{Advantages of our Methods}
\label{sec:Better} 

\textbf{One coreset for all networks.} Our coreset is model-independent, i.e., we aim at improving the running time of multiple neural networks. Contrary to other methods that need to compute the coreset after each gradient update to support their theoretical proofs, our method gives the advantage of computing the sensitivity (or the coreset) only once, for all of the required networks. This is because our coreset can approximate any function that can be defined (approximated) using a \emph{RBFNN} model. 

\textbf{Efficient coreset per epoch.} 
Practically, our competing methods for data selection are not applied before each epoch, but every $R$ epochs. This is since the competing methods require a lot of time to compute a new coreset since they compute the gradients of the network with respect to each input training data. However, our coreset can be computed before each epoch in a negligible time ($\sim0$ seconds), since we compute the sensitivity of each point (image) in the data once at the beginning, and then whenever we need to create a new coreset, we simply sample from the input data according to the sensitivity distribution. 

\section{Experimental Results}
\label{sec:experiments}

In this section, we practically demonstrate the efficiency and stability of our \emph{RBFNN} coreset approach for training deep neural networks via data subset selection. We mainly study the trade-off between accuracy and efficiency. 

\textbf{Competing methods.} We compare our method against many variants of the proposed algorithms in~\cite{killamsetty2021grad} (denoted by, \emph{GRAD-MATCH}), in~\cite{mirzasoleiman2020coresets} (denoted by \emph{CRAIG}), and in~\cite{killamsetty2021glister} (denoted by \emph{GLISTER}). For each of these methods, we report the results for $4$ variants: (i) the ``vanilla'' method, denoted by its original name, (ii) applying a warm start i.e., training on the whole data for $50\%$ of the training time before training the other $50\%$ on the coreset, where such methods are denoted by adding the suffix -WARM. (iii) a more efficient version of each of the competing methods denoted by adding the suffix PB (more details are given at~\cite{killamsetty2021grad}), and finally, a combination of both (ii) and (iii). In other words, the competing methods are GRAD-MATCH, GRAD-MATCHPB, GRAD-MATCH-WARM, GRAD-MATCHPB-WARM, CRAIG, CRAIGPB, CRAIG-WARM, CRAIGPB-WARM, and GLISTER-WARM. We also compare against randomly selecting points (denoted by RANDOM).

\textbf{Datasets and model architecture.} We performed our experiments for training CIFAR10 and CIFAR100~\cite{krizhevsky2009learning} on ResNet18~\cite{he2016deep}, 
MNIST~\cite{lecun1998gradient} on LeNet, and ImageNet-2012~\cite{deng2009imagenet} on Resnet18~\cite{he2016deep}. 

\textbf{The setting. }We adapted the same setting of~\cite{killamsetty2021grad}, where we used SGD optimizer for training initial learning rate equal to $0.01$, a momentum of $0.9$, and a weight decay of $5e-4$. We decay the learning rate using cosine annealing~\cite{loshchilov2016sgdr} for each epoch. 
For MNIST, we trained the LeNet model for $200$ epochs. For CIFAR10 and CIFAR100, we trained the ResNet18 for $300$ epochs - all on batches of size $20$ for the subset selection training versions. We train the data selection methods and the entire data training with the same number of epochs; the main difference is the number of samples used for training a single epoch. All experiments were executed on V100 GPUs. 
The reported test accuracy in the results is after averaging across five runs. 

\textbf{Subset sizes and the $R$ parameter.} For MNIST, we use sizes of $\br{1\%, 3\%, 5\%, 10\%}$, while for CIFAR10 and CIFAR100, we use $\br{5\%, 10\%, 20\%, 30\%}$, and for ImageNet we use $5\%$.
Since the competing methods require a lot of time to compute the gradients, we set $R=20$. We note that for our coreset we can test it with $R=1$ without adding run-time since once the sensitivity vector is defined, commuting a new coreset  requires $\sim0$ seconds. However, we test it with $R=20$, to show its robustness.

\textbf{Discussion.} Tables~\ref{table:cifar10resnet}--\ref{table:cifar100resnet_warm} report the results for CIFAR10 and CIFAR100. It is clear from Tables~\ref{table:cifar10resnet} and~\ref{table:cifar10resnet_warm} that our method achieves the best accuracy, with and without warm start, for $5\%$, $20\%$, and $30\%$ subset selection on CIFAR10. For CIFAR100, our method drastically outperforms all of the methods that do not apply a warm start. When applying a warm start, we still win in half of the cases. Note that, we outperform all of the other methods in terms of accuracy vs time. 
The same phenomenon is witnessed in the ImageNet experiment (Table~\ref{table:imagenet2012resnet}) as our coreset achieves the highest accuracy. We refer the reader to the MNIST experiment (Table~\ref{tab:data_sel_mnist} in the appendix). 
We note that our sensitivity sampling vector is computed once during our experiments for each dataset. This vector can be used to sample coresets of different sizes, for different networks, at different epochs of training, in a time that is close to zero seconds. In all tables, the best results are highlighted in bold.

\textbf{Function Approximations. }
We now compare our coreset to uniform for function approximation. Specifically, we generate around $10,000$ points in $3D$, while setting the third entry of each point to be a function of the first $2$ entries, $f(x) = e^{-\norm{x}_2^2} + 0.2 \cos\term{4\norm{x}_2}$. We train an RBFNN to reproduce the function using only $400$ points, where we saw that our coreset (Figure~\ref{fig:func_approx_us}) is closer visually to the true function (Figure~\ref{fig:func_approx_entire}) using uniform sampling for reproducing the image (Figure~\ref{fig:func_approx_uniform}). Furthermore, we show for the $RBF$ fitting task on CovType dataset~\cite{dua2017uci}, where, our coreset is better than uniform sampling by a multiplicative factor of $1.5$ at max (Figure~\ref{fig:additive}).

\section{Conclusion and Future Work}
In this paper, we have introduced a coreset that provably approximates any function that can be represented by RBFNN architectures. 
Our coreset construction can be used to approximate the gradients of any deep neural networks (\emph{DNNs}), leading towards provable subset selection for learning/training \emph{DNNs}. 
We also empirically demonstrate the value of our work by showing significantly better performances over various datasets and model architectures. 
As the first work on using coresets for data subset selection with respect to \emph{RBFNNs}, our results lead to a number of interesting possible future directions. 
It is natural to ask whether there exist smaller coreset constructions that also provably give the same worst-case approximation guarantees. Furthermore, RKHS methods~\cite{claici2018wasserstein,claici2018wasserstein2} may be investigated in this context either by boosting their implementation or by merging ideas with this work.
In addition, can our results be extended to more general classes of loss functions? Finally, we remark that although our empirical results significantly beat state-of-the-art, they nevertheless only serve as a proof-of-concept and have not been fully optimized with additional heuristics.

\newpage

\bibliography{main}

\begin{thebibliography}{74}
\providecommand{\natexlab}[1]{#1}
\providecommand{\url}[1]{\texttt{#1}}
\expandafter\ifx\csname urlstyle\endcsname\relax
  \providecommand{\doi}[1]{doi: #1}\else
  \providecommand{\doi}{doi: \begingroup \urlstyle{rm}\Url}\fi

\bibitem[ope(2023)]{opencode}
Open source code for all the algorithms presented in this paper, 2023.
\newblock
  \href{https://github.com/muradtuk/Provable-Data-Subset-Selection-For-Efficient-Neural-Network-Training}{Link
  for open-source code.}

\bibitem[Babu \& Suresh(2012)Babu and Suresh]{babu2012sequential}
Babu, G.~S. and Suresh, S.
\newblock Sequential projection-based metacognitive learning in a radial basis
  function network for classification problems.
\newblock \emph{IEEE transactions on neural networks and learning systems},
  24\penalty0 (2):\penalty0 194--206, 2012.

\bibitem[Baykal et~al.(2022)Baykal, Liebenwein, Gilitschenski, Feldman, and
  Rus]{BaykalLGFR22}
Baykal, C., Liebenwein, L., Gilitschenski, I., Feldman, D., and Rus, D.
\newblock Sensitivity-informed provable pruning of neural networks.
\newblock \emph{{SIAM} J. Math. Data Sci.}, 4\penalty0 (1):\penalty0 26--45,
  2022.

\bibitem[Bohdal et~al.(2020)Bohdal, Yang, and Hospedales]{BohdalYH20}
Bohdal, O., Yang, Y., and Hospedales, T.~M.
\newblock Flexible dataset distillation: Learn labels instead of images.
\newblock \emph{CoRR}, abs/2006.08572, 2020.

\bibitem[Braverman et~al.(2016)Braverman, Feldman, and Lang]{braverman2016new}
Braverman, V., Feldman, D., and Lang, H.
\newblock New frameworks for offline and streaming coreset constructions.
\newblock \emph{arXiv preprint arXiv:1612.00889}, 2016.

\bibitem[Braverman et~al.(2020)Braverman, Drineas, Musco, Musco, Upadhyay,
  Woodruff, and Zhou]{BravermanDMMUWZ20}
Braverman, V., Drineas, P., Musco, C., Musco, C., Upadhyay, J., Woodruff,
  D.~P., and Zhou, S.
\newblock Near optimal linear algebra in the online and sliding window models.
\newblock In \emph{61st {IEEE} Annual Symposium on Foundations of Computer
  Science, {FOCS}}, pp.\  517--528, 2020.

\bibitem[Brown et~al.(2020)Brown, Mann, Ryder, Subbiah, Kaplan, Dhariwal,
  Neelakantan, Shyam, Sastry, Askell, Agarwal, Herbert{-}Voss, Krueger,
  Henighan, Child, Ramesh, Ziegler, Wu, Winter, Hesse, Chen, Sigler, Litwin,
  Gray, Chess, Clark, Berner, McCandlish, Radford, Sutskever, and
  Amodei]{BrownMRSKDNSSAA20}
Brown, T.~B., Mann, B., Ryder, N., Subbiah, M., Kaplan, J., Dhariwal, P.,
  Neelakantan, A., Shyam, P., Sastry, G., Askell, A., Agarwal, S.,
  Herbert{-}Voss, A., Krueger, G., Henighan, T., Child, R., Ramesh, A.,
  Ziegler, D.~M., Wu, J., Winter, C., Hesse, C., Chen, M., Sigler, E., Litwin,
  M., Gray, S., Chess, B., Clark, J., Berner, C., McCandlish, S., Radford, A.,
  Sutskever, I., and Amodei, D.
\newblock Language models are few-shot learners.
\newblock In \emph{Advances in Neural Information Processing Systems 33: Annual
  Conference on Neural Information Processing Systems, NeurIPS}, 2020.

\bibitem[Campbell \& Broderick(2018)Campbell and Broderick]{CampbellB18}
Campbell, T. and Broderick, T.
\newblock Bayesian coreset construction via greedy iterative geodesic ascent.
\newblock In \emph{Proceedings of the 35th International Conference on Machine
  Learning, {ICML}}, pp.\  697--705, 2018.

\bibitem[Chen(2009)]{Chen09}
Chen, K.
\newblock On coresets for k-median and k-means clustering in metric and
  euclidean spaces and their applications.
\newblock \emph{{SIAM} J. Comput.}, 39\penalty0 (3):\penalty0 923--947, 2009.

\bibitem[Chhaya et~al.(2020)Chhaya, Dasgupta, and Shit]{Chhaya0S20}
Chhaya, R., Dasgupta, A., and Shit, S.
\newblock On coresets for regularized regression.
\newblock In \emph{Proceedings of the 37th International Conference on Machine
  Learning, {ICML}}, 2020.

\bibitem[Claici \& Solomon(2018)Claici and Solomon]{claici2018wasserstein}
Claici, S. and Solomon, J.
\newblock Wasserstein coresets for lipschitz costs.
\newblock \emph{stat}, 1050:\penalty0 18, 2018.

\bibitem[Claici et~al.(2018)Claici, Genevay, and
  Solomon]{claici2018wasserstein2}
Claici, S., Genevay, A., and Solomon, J.
\newblock Wasserstein measure coresets.
\newblock \emph{arXiv preprint arXiv:1805.07412}, 2018.

\bibitem[Clarkson(2010)]{Clarkson10}
Clarkson, K.~L.
\newblock Coresets, sparse greedy approximation, and the frank-wolfe algorithm.
\newblock \emph{{ACM} Trans. Algorithms}, 6\penalty0 (4):\penalty0 63:1--63:30,
  2010.

\bibitem[Clarkson \& Woodruff(2017)Clarkson and Woodruff]{clarkson2017low}
Clarkson, K.~L. and Woodruff, D.~P.
\newblock Low-rank approximation and regression in input sparsity time.
\newblock \emph{Journal of the ACM (JACM)}, 63\penalty0 (6):\penalty0 1--45,
  2017.

\bibitem[Cohen et~al.(2017)Cohen, Musco, and Musco]{CohenMM17}
Cohen, M.~B., Musco, C., and Musco, C.
\newblock Input sparsity time low-rank approximation via ridge leverage score
  sampling.
\newblock In \emph{Proceedings of the Twenty-Eighth Annual {ACM-SIAM} Symposium
  on Discrete Algorithms, {SODA}}, pp.\  1758--1777, 2017.

\bibitem[Cohen{-}Addad et~al.(2022)Cohen{-}Addad, Larsen, Saulpic, and
  Schwiegelshohn]{Cohen-AddadLSS22}
Cohen{-}Addad, V., Larsen, K.~G., Saulpic, D., and Schwiegelshohn, C.
\newblock Towards optimal lower bounds for k-median and k-means coresets.
\newblock In \emph{{STOC} '22: 54th Annual {ACM} {SIGACT} Symposium on Theory
  of Computing}, pp.\  1038--1051, 2022.

\bibitem[Coleman et~al.(2019)Coleman, Yeh, Mussmann, Mirzasoleiman, Bailis,
  Liang, Leskovec, and Zaharia]{coleman2019selection}
Coleman, C., Yeh, C., Mussmann, S., Mirzasoleiman, B., Bailis, P., Liang, P.,
  Leskovec, J., and Zaharia, M.
\newblock Selection via proxy: Efficient data selection for deep learning.
\newblock \emph{arXiv preprint arXiv:1906.11829}, 2019.

\bibitem[Dasgupta et~al.(2008)Dasgupta, Drineas, Harb, Kumar, and
  Mahoney]{DasguptaDHKM08}
Dasgupta, A., Drineas, P., Harb, B., Kumar, R., and Mahoney, M.~W.
\newblock Sampling algorithms and coresets for $l_p$ regression.
\newblock In \emph{Proceedings of the Nineteenth Annual {ACM-SIAM} Symposium on
  Discrete Algorithms, {SODA}}, pp.\  932--941, 2008.

\bibitem[Deng et~al.(2009)Deng, Dong, Socher, Li, Li, and
  Fei-Fei]{deng2009imagenet}
Deng, J., Dong, W., Socher, R., Li, L.-J., Li, K., and Fei-Fei, L.
\newblock Imagenet: A large-scale hierarchical image database.
\newblock In \emph{2009 IEEE conference on computer vision and pattern
  recognition}, pp.\  248--255. Ieee, 2009.

\bibitem[Devlin et~al.(2019)Devlin, Chang, Lee, and Toutanova]{DevlinCLT19}
Devlin, J., Chang, M., Lee, K., and Toutanova, K.
\newblock {BERT:} pre-training of deep bidirectional transformers for language
  understanding.
\newblock In \emph{Proceedings of the 2019 Conference of the North American
  Chapter of the Association for Computational Linguistics: Human Language
  Technologies, {NAACL-HLT}}, pp.\  4171--4186, 2019.

\bibitem[Dosovitskiy et~al.(2021)Dosovitskiy, Beyer, Kolesnikov, Weissenborn,
  Zhai, Unterthiner, Dehghani, Minderer, Heigold, Gelly, Uszkoreit, and
  Houlsby]{DosovitskiyB0WZ21}
Dosovitskiy, A., Beyer, L., Kolesnikov, A., Weissenborn, D., Zhai, X.,
  Unterthiner, T., Dehghani, M., Minderer, M., Heigold, G., Gelly, S.,
  Uszkoreit, J., and Houlsby, N.
\newblock An image is worth 16x16 words: Transformers for image recognition at
  scale.
\newblock In \emph{9th International Conference on Learning Representations,
  {ICLR}}, 2021.

\bibitem[Dua et~al.(2017)Dua, Graff, et~al.]{dua2017uci}
Dua, D., Graff, C., et~al.
\newblock Uci machine learning repository, 2017.

\bibitem[Feldman(2020)]{Feldman20}
Feldman, D.
\newblock Core-sets: An updated survey.
\newblock \emph{WIREs Data Mining Knowl. Discov.}, 10\penalty0 (1), 2020.

\bibitem[Feldman et~al.(2020)Feldman, Schmidt, and Sohler]{FeldmanSS20}
Feldman, D., Schmidt, M., and Sohler, C.
\newblock Turning big data into tiny data: Constant-size coresets for k-means,
  pca, and projective clustering.
\newblock \emph{{SIAM} J. Comput.}, 49\penalty0 (3):\penalty0 601--657, 2020.

\bibitem[Har{-}Peled \& Mazumdar(2004)Har{-}Peled and Mazumdar]{Har-PeledM04}
Har{-}Peled, S. and Mazumdar, S.
\newblock On coresets for k-means and k-median clustering.
\newblock In \emph{Proceedings of the 36th Annual {ACM} Symposium on Theory of
  Computing}, pp.\  291--300, 2004.

\bibitem[Harpham \& Dawson(2006)Harpham and Dawson]{harpham2006effect}
Harpham, C. and Dawson, C.~W.
\newblock The effect of different basis functions on a radial basis function
  network for time series prediction: A comparative study.
\newblock \emph{Neurocomputing}, 69\penalty0 (16-18):\penalty0 2161--2170,
  2006.

\bibitem[He et~al.(2016)He, Zhang, Ren, and Sun]{he2016deep}
He, K., Zhang, X., Ren, S., and Sun, J.
\newblock Deep residual learning for image recognition.
\newblock In \emph{Proceedings of the IEEE conference on computer vision and
  pattern recognition}, pp.\  770--778, 2016.

\bibitem[Huang \& Vishnoi(2020)Huang and Vishnoi]{HuangV20}
Huang, L. and Vishnoi, N.~K.
\newblock Coresets for clustering in euclidean spaces: importance sampling is
  nearly optimal.
\newblock In \emph{Proccedings of the 52nd Annual {ACM} {SIGACT} Symposium on
  Theory of Computing, {STOC}}, pp.\  1416--1429, 2020.

\bibitem[Indyk et~al.(2020)Indyk, Mahabadi, Gharan, and Rezaei]{IndykMGR20}
Indyk, P., Mahabadi, S., Gharan, S.~O., and Rezaei, A.
\newblock Composable core-sets for determinant maximization problems via
  spectral spanners.
\newblock In \emph{Proceedings of the 2020 {ACM-SIAM} Symposium on Discrete
  Algorithms, {SODA}}, pp.\  1675--1694, 2020.

\bibitem[Jubran et~al.(2020)Jubran, Tukan, Maalouf, and
  Feldman]{jubran2020sets}
Jubran, I., Tukan, M., Maalouf, A., and Feldman, D.
\newblock Sets clustering.
\newblock In \emph{International Conference on Machine Learning}, pp.\
  4994--5005. PMLR, 2020.

\bibitem[Jubran et~al.(2021)Jubran, Maalouf, and Feldman]{jubran2021overview}
Jubran, I., Maalouf, A., and Feldman, D.
\newblock Overview of accurate coresets.
\newblock \emph{Wiley Interdisciplinary Reviews: Data Mining and Knowledge
  Discovery}, pp.\  e1429, 2021.

\bibitem[Kaushal et~al.(2019)Kaushal, Iyer, Kothawade, Mahadev, Doctor, and
  Ramakrishnan]{kaushal2019learning}
Kaushal, V., Iyer, R., Kothawade, S., Mahadev, R., Doctor, K., and
  Ramakrishnan, G.
\newblock Learning from less data: A unified data subset selection and active
  learning framework for computer vision.
\newblock In \emph{2019 IEEE Winter Conference on Applications of Computer
  Vision (WACV)}, pp.\  1289--1299. IEEE Computer Society, 2019.

\bibitem[Killamsetty et~al.(2021{\natexlab{a}})Killamsetty, Sivasubramanian,
  Ramakrishnan, De, and Iyer]{killamsetty2021grad}
Killamsetty, K., Sivasubramanian, D., Ramakrishnan, G., De, A., and Iyer, R.~K.
\newblock {GRAD-MATCH:} gradient matching based data subset selection for
  efficient deep model training.
\newblock In \emph{Proceedings of the 38th International Conference on Machine
  Learning, {ICML}}, pp.\  5464--5474, 2021{\natexlab{a}}.

\bibitem[Killamsetty et~al.(2021{\natexlab{b}})Killamsetty, Sivasubramanian,
  Ramakrishnan, and Iyer]{killamsetty2021glister}
Killamsetty, K., Sivasubramanian, D., Ramakrishnan, G., and Iyer, R.~K.
\newblock {GLISTER:} generalization based data subset selection for efficient
  and robust learning.
\newblock In \emph{Thirty-Fifth {AAAI} Conference on Artificial Intelligence,
  {AAAI}}, 2021{\natexlab{b}}.

\bibitem[Krizhevsky et~al.(2009)Krizhevsky, Hinton,
  et~al.]{krizhevsky2009learning}
Krizhevsky, A., Hinton, G., et~al.
\newblock Learning multiple layers of features from tiny images, 2009.

\bibitem[Krizhevsky et~al.(2017)Krizhevsky, Sutskever, and
  Hinton]{KrizhevskySH17}
Krizhevsky, A., Sutskever, I., and Hinton, G.~E.
\newblock Imagenet classification with deep convolutional neural networks.
\newblock \emph{Commun. {ACM}}, 60\penalty0 (6):\penalty0 84--90, 2017.

\bibitem[LeCun et~al.(1998)LeCun, Bottou, Bengio, and
  Haffner]{lecun1998gradient}
LeCun, Y., Bottou, L., Bengio, Y., and Haffner, P.
\newblock Gradient-based learning applied to document recognition.
\newblock \emph{Proceedings of the IEEE}, 86\penalty0 (11):\penalty0
  2278--2324, 1998.

\bibitem[Leonard \& Kramer(1991)Leonard and Kramer]{leonard1991radial}
Leonard, J.~A. and Kramer, M.~A.
\newblock Radial basis function networks for classifying process faults.
\newblock \emph{IEEE Control Systems Magazine}, 11\penalty0 (3):\penalty0
  31--38, 1991.

\bibitem[Leung et~al.(2001)Leung, Lo, and Wang]{LeungLW01}
Leung, H., Lo, T. K.~Y., and Wang, S.
\newblock Prediction of noisy chaotic time series using an optimal radial basis
  function neural network.
\newblock \emph{{IEEE} Trans. Neural Networks}, 12\penalty0 (5):\penalty0
  1163--1172, 2001.

\bibitem[Liebenwein et~al.(2019)Liebenwein, Baykal, Lang, Feldman, and
  Rus]{liebenwein2019provable}
Liebenwein, L., Baykal, C., Lang, H., Feldman, D., and Rus, D.
\newblock Provable filter pruning for efficient neural networks.
\newblock In \emph{International Conference on Learning Representations}, 2019.

\bibitem[Liu(2013)]{liu2013radial}
Liu, J.
\newblock \emph{Radial Basis Function (RBF) neural network control for
  mechanical systems: design, analysis and Matlab simulation}.
\newblock Springer Science \& Business Media, 2013.

\bibitem[Loshchilov \& Hutter(2016)Loshchilov and Hutter]{loshchilov2016sgdr}
Loshchilov, I. and Hutter, F.
\newblock Sgdr: Stochastic gradient descent with warm restarts.
\newblock \emph{arXiv preprint arXiv:1608.03983}, 2016.

\bibitem[Lu et~al.(1997)Lu, Sundararajan, and Saratchandran]{LuSS97}
Lu, Y., Sundararajan, N., and Saratchandran, P.
\newblock A sequential learning scheme for function approximation using minimal
  radial basis function neural networks.
\newblock \emph{Neural Comput.}, 9\penalty0 (2):\penalty0 461--478, 1997.

\bibitem[Maalouf et~al.(2019)Maalouf, Jubran, and Feldman]{maalouf2019fast}
Maalouf, A., Jubran, I., and Feldman, D.
\newblock Fast and accurate least-mean-squares solvers.
\newblock In \emph{Proceedings of the 33rd International Conference on Neural
  Information Processing Systems}, pp.\  8307--8318, 2019.

\bibitem[Maalouf et~al.(2020)Maalouf, Statman, and Feldman]{maalouf2020tight}
Maalouf, A., Statman, A., and Feldman, D.
\newblock Tight sensitivity bounds for smaller coresets.
\newblock In \emph{Proceedings of the 26th ACM SIGKDD International Conference
  on Knowledge Discovery \& Data Mining}, pp.\  2051--2061, 2020.

\bibitem[Maalouf et~al.(2021)Maalouf, Jubran, Tukan, and
  Feldman]{maalouf2021coresets}
Maalouf, A., Jubran, I., Tukan, M., and Feldman, D.
\newblock Coresets for the average case error for finite query sets.
\newblock \emph{Sensors}, 21\penalty0 (19):\penalty0 6689, 2021.

\bibitem[Maalouf et~al.(2022{\natexlab{a}})Maalouf, Eini, Mussay, Feldman, and
  Osadchy]{maalouf2022unified}
Maalouf, A., Eini, G., Mussay, B., Feldman, D., and Osadchy, M.
\newblock A unified approach to coreset learning.
\newblock \emph{IEEE Transactions on Neural Networks and Learning Systems},
  2022{\natexlab{a}}.

\bibitem[Maalouf et~al.(2022{\natexlab{b}})Maalouf, Jubran, and
  Feldman]{maalouf2022fast}
Maalouf, A., Jubran, I., and Feldman, D.
\newblock Fast and accurate least-mean-squares solvers for high dimensional
  data.
\newblock \emph{IEEE Transactions on Pattern Analysis and Machine
  Intelligence}, 2022{\natexlab{b}}.

\bibitem[Maalouf et~al.(2022{\natexlab{c}})Maalouf, Tukan, Price, Kane, and
  Feldman]{Maalouf2022Coresets}
Maalouf, A., Tukan, M., Price, E., Kane, D.~G., and Feldman, D.
\newblock Coresets for data discretization and sine wave fitting.
\newblock In \emph{International Conference on Artificial Intelligence and
  Statistics}. PMLR, 2022{\natexlab{c}}.

\bibitem[Mahabadi et~al.(2020)Mahabadi, Razenshteyn, Woodruff, and
  Zhou]{MahabadiRWZ20}
Mahabadi, S., Razenshteyn, I.~P., Woodruff, D.~P., and Zhou, S.
\newblock Non-adaptive adaptive sampling on turnstile streams.
\newblock In \emph{Proccedings of the 52nd Annual {ACM} {SIGACT} Symposium on
  Theory of Computing, {STOC}}, pp.\  1251--1264, 2020.

\bibitem[Meyer et~al.(2022)Meyer, Musco, Musco, Woodruff, and
  Zhou]{MeyerMMWZ22}
Meyer, R.~A., Musco, C., Musco, C., Woodruff, D.~P., and Zhou, S.
\newblock Fast regression for structured inputs.
\newblock In \emph{The Tenth International Conference on Learning
  Representations, {ICLR}}, 2022.

\bibitem[Mirzasoleiman et~al.(2020{\natexlab{a}})Mirzasoleiman, Bilmes, and
  Leskovec]{mirzasoleiman2020coresets}
Mirzasoleiman, B., Bilmes, J.~A., and Leskovec, J.
\newblock Coresets for data-efficient training of machine learning models.
\newblock In \emph{Proceedings of the 37th International Conference on Machine
  Learning, {ICML}}, pp.\  6950--6960, 2020{\natexlab{a}}.

\bibitem[Mirzasoleiman et~al.(2020{\natexlab{b}})Mirzasoleiman, Cao, and
  Leskovec]{MirzasoleimanCL20}
Mirzasoleiman, B., Cao, K., and Leskovec, J.
\newblock Coresets for robust training of deep neural networks against noisy
  labels.
\newblock In \emph{Advances in Neural Information Processing Systems 33: Annual
  Conference on Neural Information Processing Systems, NeurIPS},
  2020{\natexlab{b}}.

\bibitem[Mussay et~al.(2020)Mussay, Osadchy, Braverman, Zhou, and
  Feldman]{MussayOBZF20}
Mussay, B., Osadchy, M., Braverman, V., Zhou, S., and Feldman, D.
\newblock Data-independent neural pruning via coresets.
\newblock In \emph{8th International Conference on Learning Representations,
  {ICLR}}, 2020.

\bibitem[Nguyen et~al.(2021)Nguyen, Novak, Xiao, and Lee]{NguyenNXL21}
Nguyen, T., Novak, R., Xiao, L., and Lee, J.
\newblock Dataset distillation with infinitely wide convolutional networks.
\newblock In \emph{Advances in Neural Information Processing Systems 34: Annual
  Conference on Neural Information Processing Systems, NeurIPS}, pp.\
  5186--5198, 2021.

\bibitem[Ozkara et~al.(2021)Ozkara, Singh, Data, and Diggavi]{OzkaraSDD21}
Ozkara, K., Singh, N., Data, D., and Diggavi, S.~N.
\newblock Quped: Quantized personalization via distillation with applications
  to federated learning.
\newblock In \emph{Advances in Neural Information Processing Systems 34: Annual
  Conference on Neural Information Processing Systems, NeurIPS}, pp.\
  3622--3634, 2021.

\bibitem[Padmavati(2011)]{padmavati2011comparative}
Padmavati, J.
\newblock A comparative study on breast cancer prediction using rbf and mlp.
\newblock \emph{International Journal of Scientific \& Engineering Research},
  2\penalty0 (1):\penalty0 1--5, 2011.

\bibitem[Park \& Sandberg(1991)Park and Sandberg]{ParkS91}
Park, J. and Sandberg, I.~W.
\newblock Universal approximation using radial-basis-function networks.
\newblock \emph{Neural Comput.}, 3\penalty0 (2):\penalty0 246--257, 1991.

\bibitem[Park \& Sandberg(1993)Park and Sandberg]{ParkS93}
Park, J. and Sandberg, I.~W.
\newblock Approximation and radial-basis-function networks.
\newblock \emph{Neural Comput.}, 5\penalty0 (2):\penalty0 305--316, 1993.

\bibitem[Schwartz et~al.(2020)Schwartz, Dodge, Smith, and
  Etzioni]{SchwartzDSE20}
Schwartz, R., Dodge, J., Smith, N.~A., and Etzioni, O.
\newblock Green {AI}.
\newblock \emph{Commun. {ACM}}, 63\penalty0 (12):\penalty0 54--63, 2020.

\bibitem[Sharir et~al.(2020)Sharir, Peleg, and Shoham]{SharirPS20}
Sharir, O., Peleg, B., and Shoham, Y.
\newblock The cost of training {NLP} models: {A} concise overview.
\newblock \emph{CoRR}, abs/2004.08900, 2020.

\bibitem[Strubell et~al.(2019)Strubell, Ganesh, and McCallum]{StrubellGM19}
Strubell, E., Ganesh, A., and McCallum, A.
\newblock Energy and policy considerations for deep learning in {NLP}.
\newblock In \emph{Proceedings of the 57th Conference of the Association for
  Computational Linguistics, {ACL}}, pp.\  3645--3650, 2019.

\bibitem[Tolochinsky et~al.(2022)Tolochinsky, Jubran, and
  Feldman]{TolochinskyJF22}
Tolochinsky, E., Jubran, I., and Feldman, D.
\newblock Generic coreset for scalable learning of monotonic kernels: Logistic
  regression, sigmoid and more.
\newblock In \emph{International Conference on Machine Learning, {ICML}}, 2022.

\bibitem[Tukan et~al.(2020)Tukan, Maalouf, and Feldman]{tukan2020coresets}
Tukan, M., Maalouf, A., and Feldman, D.
\newblock Coresets for near-convex functions.
\newblock \emph{Advances in Neural Information Processing Systems}, 33, 2020.

\bibitem[Tukan et~al.(2021)Tukan, Baykal, Feldman, and Rus]{TukanBFR21}
Tukan, M., Baykal, C., Feldman, D., and Rus, D.
\newblock On coresets for support vector machines.
\newblock \emph{Theor. Comput. Sci.}, 890:\penalty0 171--191, 2021.

\bibitem[Tukan et~al.(2022{\natexlab{a}})Tukan, Mualem, and
  Maalouf]{Tukan2022provable}
Tukan, M., Mualem, L., and Maalouf, A.
\newblock Pruning neural networks via coresets and convex geometry: Towards no
  assumptions.
\newblock In \emph{Proceedings of the 36th International Conference on Neural
  Information Processing Systems}, 2022{\natexlab{a}}.

\bibitem[Tukan et~al.(2022{\natexlab{b}})Tukan, Wu, Zhou, Braverman, and
  Feldman]{Tukan0ZBF22}
Tukan, M., Wu, X., Zhou, S., Braverman, V., and Feldman, D.
\newblock New coresets for projective clustering and applications.
\newblock In \emph{International Conference on Artificial Intelligence and
  Statistics, {AISTATS}}, pp.\  5391--5415, 2022{\natexlab{b}}.

\bibitem[Wei et~al.(2014)Wei, Iyer, and Bilmes]{wei2014fast}
Wei, K., Iyer, R., and Bilmes, J.
\newblock Fast multi-stage submodular maximization.
\newblock In \emph{International conference on machine learning}, pp.\
  1494--1502. PMLR, 2014.

\bibitem[Whitehead \& Choate(1996)Whitehead and Choate]{WhiteheadC96}
Whitehead, B.~A. and Choate, T.~D.
\newblock Cooperative-competitive genetic evolution of radial basis function
  centers and widths for time series prediction.
\newblock \emph{{IEEE} Trans. Neural Networks}, 7\penalty0 (4):\penalty0
  869--880, 1996.

\bibitem[Woodruff \& Yasuda(2022)Woodruff and Yasuda]{WoodruffY22}
Woodruff, D.~P. and Yasuda, T.
\newblock High-dimensional geometric streaming in polynomial space.
\newblock \emph{CoRR}, abs/2204.03790, 2022.

\bibitem[Wu et~al.(2012)Wu, Wang, Zhang, and Du]{wu2012using}
Wu, Y., Wang, H., Zhang, B., and Du, K.-L.
\newblock Using radial basis function networks for function approximation and
  classification.
\newblock \emph{International Scholarly Research Notices}, 2012, 2012.

\bibitem[Wuxing et~al.(2004)Wuxing, Peter, Guicai, and
  Tielin]{wuxing2004classification}
Wuxing, L., Peter, W.~T., Guicai, Z., and Tielin, S.
\newblock Classification of gear faults using cumulants and the radial basis
  function network.
\newblock \emph{mechanical systems and signal processing}, 18\penalty0
  (2):\penalty0 381--389, 2004.

\bibitem[Yu et~al.(2011)Yu, Xie, Paszczy{\~n}ski, and
  Wilamowski]{yu2011advantages}
Yu, H., Xie, T., Paszczy{\~n}ski, S., and Wilamowski, B.~M.
\newblock Advantages of radial basis function networks for dynamic system
  design.
\newblock \emph{IEEE Transactions on Industrial Electronics}, 58\penalty0
  (12):\penalty0 5438--5450, 2011.

\bibitem[Zhu et~al.(2021)Zhu, Hong, and Zhou]{ZhuHZ21}
Zhu, Z., Hong, J., and Zhou, J.
\newblock Data-free knowledge distillation for heterogeneous federated
  learning.
\newblock In \emph{Proceedings of the 38th International Conference on Machine
  Learning, {ICML}}, pp.\  12878--12889, 2021.

\end{thebibliography}
\bibliographystyle{icml2023}

\newpage
\appendix
\onecolumn
\section{Coreset Constructions}
\label{sec:appendex_Coreset_const}
In what follows, we provide the necessary tools to obtain a coreset; see Definition~\ref{def:epsCore}.

\begin{definition}[Query space]
\label{def:querySpace}
Let $P$ be a set of $n\geq1$ points in $\REAL^d$, $w : P \to [0,\infty)$ be a non-negative weight function, and let $f: P \times \REAL^d \to [0,\infty)$ denote a loss function. The tuple $(P, w, \REAL^d, f)$ is called a query space.
\end{definition}

\begin{definition}[VC-dimension~\citep{braverman2016new}]
\label{def:dimension}
For a query space $(P,w,\REAL^d,f)$ and $r \in [0,\infty)$, we define 
\[
\RANGES(x,r) = \br{p \in P \mid w(p) f(p,x) \leq r},
\]
for every $x \in \REAL^d$ and $r \geq 0$. The dimension of $(P, w, \REAL^d, f)$ is the size $\abs{S}$ of the largest subset $S \subset P$ such that
\[
\abs{\br{S \cap \RANGES(x,r) \mid x \in \REAL^d, r \geq 0 }} = 2^{\abs{S}},
\]
where $\abs{ A }$ denotes the number of points in $A$ for every $A \subseteq \REAL^d$.
\end{definition}

The following theorem formally describes how to construct an $\eps$-coreset based on the sensitivity sampling framework.
\begin{theorem}[Restatement of Theorem 5.5 in~\citep{braverman2016new}]
\label{thm:coreset}
Let $\term{P,w,\REAL^d, f}$ be a query space as in Definition~\ref{def:querySpace}. For every $p \in P$ define the \emph{sensitivity} of $p$ as
$
\sup_{x \in \REAL^d} \frac{w(p)f(p,x)}{\sum_{q \in P} w(q)f(q,x)},
$
where the sup is over every $x \in \REAL^d$ such that the denominator is non-zero.
Let $s: P \to [0,1]$ be a function such that $s(p)$ is an upper bound on the sensitivity of $p$.
Let $t = \sum_{p \in P} s(p)$ and $d'$ be the~\emph{VC dimension} of the triplet $\term{P,w,\REAL^d,f}$; see Definition~\ref{def:dimension}. Let $c \geq 1$ be a sufficiently large constant, $\varepsilon, \delta \in (0,1)$, and let $S$ be a random sample of $$\abs{S} \geq \frac{ct}{\varepsilon^2}\left(d'\log{t}+\log{\frac{1}{\delta}}\right)$$
i.i.d points from $P$, such that every $p \in P$ is sampled with probability $s(p)/t$. Let $v(p) = \frac{tw(p)}{s(p)\abs{S}}$ for every $p \in S$. Then, with probability at least $1-\delta$, $(S,v)$ is an $\varepsilon$-coreset for $P$ with respect to $f$.
\end{theorem}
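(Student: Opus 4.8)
The plan is to prove this as an instance of the Feldman--Langberg sensitivity-sampling framework, whose two ingredients are importance sampling (for a single fixed query) and uniform convergence over the query space (to handle all $x$ at once). Fix a query $x \in \REAL^d$ with $\sum_{q\in P} w(q)f(q,x) > 0$ and abbreviate $\mathrm{cost}(x) := \sum_{p\in P} w(p)f(p,x)$. For a single i.i.d.\ draw $q$ (sampled with probability $s(q)/t$) define the random variable $Z := \frac{t\, w(q) f(q,x)}{s(q)}$. First I would check that the reweighting $v(q) = \frac{t w(q)}{s(q)\abs{S}}$ makes the coreset cost an average of $\abs{S}$ i.i.d.\ copies of $Z$, i.e.\ $\sum_{q\in S} v(q)f(q,x) = \frac{1}{\abs{S}}\sum_i Z_i$, and that $\E[Z] = \mathrm{cost}(x)$, so the estimator is unbiased.

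Second, I would use the sensitivity bound to control $Z$: since $s(q)$ upper bounds $\sup_x \frac{w(q)f(q,x)}{\mathrm{cost}(x)}$, we get $\frac{w(q)f(q,x)}{s(q)} \le \mathrm{cost}(x)$, hence $Z \in [0,\, t\cdot\mathrm{cost}(x)]$ and, by the same estimate, $\Var(Z) \le \E[Z^2] \le t\cdot\mathrm{cost}(x)^2$. These are precisely the range and variance bounds a Bernstein-type inequality needs to show that, for a single fixed $x$, an average of $\tilde{O}(t/\eps^2)$ copies of $Z$ lies within $\eps\,\mathrm{cost}(x)$ of its mean with high probability. Note the multiplicative (relative) nature of the error comes for free from normalizing by $\mathrm{cost}(x)$.

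Third---and this is the crux---I would upgrade the per-query guarantee to one holding simultaneously for every admissible $x$. Since $X$ is infinite a union bound is unavailable, so I would invoke the uniform-convergence machinery for the range space of Definition~\ref{def:dimension}, whose ground set is $P$ and whose ranges are the sublevel sets $\RANGES(x,r) = \{p : w(p)f(p,x)\le r\}$; by hypothesis this range space has VC dimension $d'$. The Braverman--Feldman--Lang $\eps$-approximation theorem for such bounded range spaces then guarantees that a sensitivity-weighted sample of size $\frac{ct}{\eps^2}\term{d'\log t + \log\frac1\delta}$ is, with probability $1-\delta$, an $\eps$-approximation that transfers to a multiplicative $(1\pm\eps)$-estimate of $\mathrm{cost}(x)$ uniformly over all $x$.

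I expect the main obstacle to be exactly this last step: relating the sensitivity-weighted empirical measure to the uniform-convergence guarantee so that an $\eps$-approximation over the range space becomes a \emph{relative} $\eps$-error on $\mathrm{cost}(x)$ for every $x$ rather than merely an additive one, and obtaining the linear-in-$t$ (rather than quadratic) sample dependence, which requires feeding the variance bound above into the relative-approximation/chaining argument. The unbiasedness and single-query concentration are routine; all the technical weight sits in the simultaneous-over-$x$ argument, which is precisely the content imported from Theorem 5.5 of~\citep{braverman2016new}.
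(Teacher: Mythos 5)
The paper contains no proof of this statement: it is imported verbatim as Theorem 5.5 of \citep{braverman2016new}, so the only ``paper proof'' is the citation, and your sketch correctly reconstructs the standard argument behind that cited result --- the unbiased importance-sampling estimator $Z$ with $\E[Z]=\mathrm{cost}(x)$, the range bound $Z\le t\cdot\mathrm{cost}(x)$ and variance bound $\E[Z^2]\le t\cdot\mathrm{cost}(x)^2$ extracted from the sensitivity upper bounds, and the upgrade to a simultaneous guarantee via a relative-error $\eps$-approximation for the range space of Definition~\ref{def:dimension} with VC dimension $d'$. You also correctly identify that the entire technical weight (turning the $\eps$-approximation into a \emph{multiplicative} error with sample size linear in $t$ rather than quadratic) is exactly the content supplied by the cited theorem, so the proposal is sound in outline and consistent with how the result is meant to be used here.
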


\section{Proofs for our Main Theorems}
\label{sec:appendix_proofs}
\subsection{Proof of Claim~\ref{clm:bound_exp_by_l1}}
\boundexp*
\begin{proof}
Put $x \in \br{x \in \REAL^d \middle| \norm{x}_2 \leq R}$ and note that if $p^Tx = 0$ then the claim is trivial.
Otherwise, we observe that
\[
e^{-\abs{p^Tx}} \geq \frac{1}{e^R} \geq \frac{1 + \abs{p^Tx}}{\term{1+R}e^R}.
\]
\end{proof}

\subsection{Proof of Theorem~\ref{thm:coreset:rbf}}

\begin{restatable}{claim}{clmfrac}
\label{clm:frac}
Let $a,b \geq 0$ be pair of nonnegative real numbers and let $c,d > 0$ be a pair of positive real numbers. Then 
\[\frac{a + b}{c + d} \leq \frac{a}{c} + \frac{b}{d}.\]
\end{restatable}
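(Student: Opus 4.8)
The plan is to prove the inequality by bounding each of the two terms on the left-hand side separately, exploiting the fact that enlarging a positive denominator can only decrease a nonnegative fraction. First I would write $\frac{a+b}{c+d} = \frac{a}{c+d} + \frac{b}{c+d}$, which is valid since $c+d > 0$. The key observation is that $c + d \geq c > 0$ and $c + d \geq d > 0$, because both $c$ and $d$ are strictly positive.

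From these two observations, and using $a \geq 0$ and $b \geq 0$, I would deduce the two termwise bounds $\frac{a}{c+d} \leq \frac{a}{c}$ and $\frac{b}{c+d} \leq \frac{b}{d}$: in each case the numerator is nonnegative and the denominator on the left is at least as large as the denominator on the right. Adding these two inequalities yields $\frac{a+b}{c+d} \leq \frac{a}{c} + \frac{b}{d}$, which is exactly the claim.

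An equivalent route, should a purely algebraic derivation be preferred, is to clear denominators: since $c, d > 0$, the claim is equivalent to $(a+b)\,cd \leq (ad + bc)(c+d)$. Expanding the right-hand side gives $acd + bcd + ad^2 + bc^2$, so after cancelling the common terms $acd + bcd$ the inequality reduces to $0 \leq ad^2 + bc^2$, which holds because $a, b \geq 0$ and $c^2, d^2 > 0$. I would favor the termwise argument for brevity, keeping the cross-multiplication as a sanity check.

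There is essentially no genuine obstacle here; the only point requiring care is to use the strict positivity of $c$ and $d$ (rather than mere nonnegativity) so that all denominators stay nonzero and the direction of the inequality $c+d \geq c$, $c+d \geq d$ is correctly applied to nonnegative numerators. The nonnegativity of $a$ and $b$ is likewise essential, since otherwise shrinking a fraction by enlarging its denominator would reverse.
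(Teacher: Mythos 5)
Your termwise argument is exactly the paper's proof: split $\frac{a+b}{c+d}=\frac{a}{c+d}+\frac{b}{c+d}$ and enlarge each fraction by shrinking its denominator to $c$ and $d$ respectively, using $c,d>0$ and $a,b\geq 0$. The cross-multiplication check you include as a sanity verification is extra but correct; no gap.
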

\begin{proof}
Observe that 
\[
\frac{a + b}{c +d} = \frac{a}{c + d} + \frac{b}{c+d} \leq \frac{a}{c} + \frac{b}{d}
\]
where the inequality holds since $c,d > 0$.
\end{proof}

\begin{lemma}[Sensitivity bound w.r.t. the \emph{RBF} loss function]
\label{lem:RBFsensbound}
Let $R \geq 1$ be a positive real number, and let $X  = \br{x \in \REAL^d \middle| \norm{x}_2 \leq R}$. Let $\term{P,w,\REAL^d,f}$ be query space as in Definition~\ref{def:querySpace} where for every $p \in P$ and $x \in \REAL^d$, $f(p,x) = e^{-\norm{p-x}^2_2}$.
Let $P^\prime := \br{q_p =  \begin{bmatrix} \norm{p}_2^2, -2p^T, 1 \end{bmatrix}^T \mid \forall p \in P}$ and let $q^\ast \in\argsup\limits_{q \in P^\prime}\, e^{3R^2\norm{q}_2} \term{1 + 3R^2\norm{q}_2}$.
Let $u(p) := \frac{w(p)}{e^{3R^2\norm{q_p}_2} \term{1 + 3R^2\norm{q_p}_2}}$ and let $(U,D,V)$ be the $\norm{\cdot}_1$-SVD of $\term{P^\prime, u(\cdot)}$. Then for every $p \in P$,
\[
s(p) \leq e^{3R^2\norm{q_p}_2} \term{1 + 3R^2\norm{q_p}_2} \term{\frac{u(p)}{\sum\limits_{p^\prime \in P} u\term{p^\prime}}+ u(p)\norm{U\term{q_p}}_1},
\]
and 
\[
\sum\limits_{q \in P} s(q) \leq e^{3R^2\norm{q^\ast}_2} \term{1 + 3R^2\norm{q^\ast}_2}\term{1+\term{d+2}^{1.5}}.
\]
\end{lemma}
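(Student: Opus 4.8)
The plan is to reduce the RBF sensitivity to the sensitivity of an $\ell_1$-regression problem by combining a linearizing lift with a point-dependent reweighting. First I would record the key lift identity: writing $\tilde{x} := \term{1, x^T, \norm{x}_2^2}^T \in \REAL^{d+2}$, a direct expansion gives $q_p^T\tilde{x} = \norm{p}_2^2 - 2p^Tx + \norm{x}_2^2 = \norm{p-x}_2^2$, so that $f(p,x) = e^{-\norm{p-x}_2^2} = e^{-q_p^T\tilde{x}}$. This converts the non-linear RBF loss into the exponential of a linear form in the lifted query $\tilde{x}$, which is what lets the $\ell_1$ machinery of Definition~\ref{def:l1svd} and Lemma~\ref{lem:sens_bound} apply.

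Next I would sandwich $f(p,x)$ between scalar multiples of $1 + \abs{q_p^T\tilde{x}}$ using Claim~\ref{clm:bound_exp_by_l1}. To match the hypotheses of that claim I normalize: set the ``point'' to be the unit vector $q_p/\norm{q_p}_2$ and the ``query'' to be $\norm{q_p}_2\,\tilde{x}$, so their inner product equals $q_p^T\tilde{x} = \norm{p-x}_2^2 \geq 0$. Since $\norm{x}_2 \leq R$ and $R \geq 1$, one has $\norm{\tilde{x}}_2 = \sqrt{1 + \norm{x}_2^2 + \norm{x}_2^4} \leq \sqrt{3}\,R^2$, whence $\norm{\norm{q_p}_2\,\tilde{x}}_2 \leq 3R^2\norm{q_p}_2 =: R'$. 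Applying Claim~\ref{clm:bound_exp_by_l1} with radius $R'$ yields $\frac{1}{C_p}\term{1 + \abs{q_p^T\tilde{x}}} \leq f(p,x) \leq 1 + \abs{q_p^T\tilde{x}}$, where $C_p = e^{R'}\term{1+R'} = e^{3R^2\norm{q_p}_2}\term{1 + 3R^2\norm{q_p}_2}$. This both explains the leading factor in the statement and motivates the reweighting $u(p) = w(p)/C_p$.

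Plugging the sandwich into the sensitivity (upper bound in the numerator, lower bound termwise in the denominator) gives $s(p) \leq C_p \sup_{x} \frac{u(p)\term{1+\abs{q_p^T\tilde{x}}}}{\sum_{q} u(q)\term{1+\abs{q_q^T\tilde{x}}}}$, since $w(q)/C_q = u(q)$ and $w(p) = u(p)C_p$. I then split this fraction with Claim~\ref{clm:frac}, taking its $a = u(p)$, $b = u(p)\abs{q_p^T\tilde{x}}$, $c = \sum_q u(q)$, $d = \sum_q u(q)\abs{q_q^T\tilde{x}}$, to obtain a ``uniform'' term $\frac{u(p)}{\sum_q u(q)}$ plus an $\ell_1$-term $\frac{u(p)\abs{q_p^T\tilde{x}}}{\sum_q u(q)\abs{q_q^T\tilde{x}}}$. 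Enlarging the supremum from the constrained set of lifts to all of $\REAL^{d+2}$ only increases it, and invoking Lemma~\ref{lem:sens_bound}\ref{claim:lzreg1} for the query space $\term{P', u, \REAL^{d+2}, \abs{\cdot}}$ with its $\norm{\cdot}_1$-SVD $(U,D,V)$ bounds the $\ell_1$-term by $u(p)\norm{U(q_p)}_1$. Combining these gives the per-point bound. For the total, I bound each $C_p$ by $C_{q^\ast} = e^{3R^2\norm{q^\ast}_2}\term{1+3R^2\norm{q^\ast}_2}$ using the choice of $q^\ast$, use $\sum_p \frac{u(p)}{\sum_q u(q)} = 1$, and apply Lemma~\ref{lem:sens_bound}\ref{claim:lzreg2} in dimension $d+2$ to get $\sum_p u(p)\norm{U(q_p)}_1 \leq (d+2)^{1.5}$, yielding $\sum_p s(p) \leq C_{q^\ast}\term{1 + (d+2)^{1.5}}$.

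The main obstacle is calibrating the normalization so the constants come out exactly as stated: the claim must be applied to the unit vector $q_p/\norm{q_p}_2$ with the full magnitude $\norm{q_p}_2$ pushed into the query, and the radius must be set via $\norm{\tilde{x}}_2 \leq \sqrt{3}\,R^2 \leq 3R^2$ (using $R \geq 1$) so that $C_p = e^{3R^2\norm{q_p}_2}\term{1 + 3R^2\norm{q_p}_2}$ emerges precisely. The reweighting $u = w/C_p$ is the other essential device: it is what makes the denominator collapse into a clean weighted $\ell_1$-sum $\sum_q u(q)\abs{q_q^T\tilde{x}}$ so that Lemma~\ref{lem:sens_bound} applies verbatim; without it the per-point exponential constants would not factor cleanly out of the supremum.
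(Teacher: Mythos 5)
Your proposal matches the paper's own proof essentially step for step: the same lift $q_p^T\tilde{x}=\norm{p-x}_2^2$, the same application of Claim~\ref{clm:bound_exp_by_l1} with the normalized point $q_p/\norm{q_p}_2$ and radius $3R^2\norm{q_p}_2$ yielding $C_p=e^{3R^2\norm{q_p}_2}\term{1+3R^2\norm{q_p}_2}$, the same reweighting $u=w/C_p$, the same split via Claim~\ref{clm:frac} into the uniform term plus the $\ell_1$-regression term, and the same invocation of Lemma~\ref{lem:sens_bound} in dimension $d+2$ with $q^\ast$ supplying the total bound $C_{q^\ast}\term{1+\term{d+2}^{1.5}}$. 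The argument is correct as written, so there is nothing substantive to add.
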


\begin{proof}
Let $X = \br{x\in\REAL^d \middle| \norm{x}_2 \leq 1}$, and observe that for every $p \in P$ and $x \in \REAL^d$, it holds that 
\begin{equation}
\label{eq:rbf_equiv}
\norm{p - x}^2_2 = \abs{q_p^Ty},
\end{equation}
where $q_p = \begin{bmatrix} \norm{p}_2^2, -2p^T, 1 \end{bmatrix}^T$ and $y = \begin{bmatrix} 1, x, \norm{x}_2^2 \end{bmatrix}^T$.

Let $Y = \br{\begin{bmatrix} 1, x, \norm{x}_2^2 \end{bmatrix}^T \middle| x \in X}$. Following the definition of $Y$, for every $y \in Y$, we obtain that $\norm{y}_2 \leq 3R^2$. Hence, by plugging $p:= \frac{q_p}{\norm{q_p}}$ and $R:= 3R^2\norm{q_p}$ for every $p \in P$  into Claim~\ref{clm:bound_exp_by_l1}, we obtain that the for every $y \in Y$ and $p \in P$, 
\begin{equation}
\label{eq:bound_on_e_rbf}
\frac{1}{e^{3R^2\norm{q_p}_2} \term{1 + 3R^2\norm{q_p}_2}} \term{1 + \abs{q_p^T y}} \leq e^{-\abs{q_p^Ty}} \leq 1 + \abs{q_p^Ty}.
\end{equation}

Note that for every $p \in P$, $u(p) := \frac{w(p)}{e^{3\norm{q_p}_2} \term{1 + 3\norm{q_p}_2}}$. Thus
\begin{equation}
\begin{split}
&\sup\limits_{x \in X} \frac{w(p) f(p,x)}{\sum\limits_{q \in P} w(q) f(q,x)} \\
&= \sup\limits_{y \in Y} \frac{w(p) e^{-\abs{q_p^T y}}}{\sum\limits_{p^\prime \in P^\prime} w(q) \abs{q_{p^\prime}^T y}}\\
&\leq e^{3R^2\norm{q_p}_2} \term{1 + 3R^2\norm{q_p}_2} \sup\limits_{y \in Y} \frac{u(p) \abs{q_p^Ty} + u(p)}{\sum\limits_{p^\prime \in P}u\term{p^\prime} \abs{q_{p^\prime}^Ty} + \sum\limits_{p^\prime \in P} u\term{p^\prime}} \\
&\leq e^{3R^2\norm{q_p}_2} \term{1 + 3R^2\norm{q_p}_2} \term{\frac{u(p)}{\sum\limits_{p^\prime \in P} u\term{p^\prime}} +  \sup\limits_{y \in Y} \frac{u(p) \abs{q_p^Ty}}{\sum\limits_{p^\prime \in P} u\term{p^\prime} \abs{q_{p^\prime}^Ty}}},
\end{split}
\end{equation}
where the first inequality holds by Claim~\ref{clm:bound_exp_by_l1} and the second inequality is by Claim~\ref{clm:frac}.

Let $f :  P^\prime \times Y \to [0, \infty)$ be a function such that for every $q \in  P^\prime$ and $y \in Y$, $f(q,y) = \abs{q^Ty}$. Plugging in $P:=  P^\prime$, $w:=u$, $d:=d+2$, and $f:=f$ into Lemma~\ref{lem:sens_bound} yields for every $p \in P$
\begin{equation}
\label{eq:bound_sens_X_rbf}
\sup\limits_{x \in X} \frac{w(p) f(p,x)}{\sum\limits_{q \in P} w(q) f(q,x)} \leq e^{3R^2\norm{q_p}_2} \term{1 + 3R^2\norm{q_p}_2} \term{\frac{u(p)}{\sum\limits_{p^\prime \in P} u\term{p^\prime}}+ u(p)\norm{U\term{q_p}}_1}.
\end{equation}

Note that by definition, $q^\ast \in\argsup\limits_{q \in P^\prime}\, e^{3R^2\norm{q}_2} \term{1 + 3R^2\norm{q}_2}$. Then the total sensitivity is bounded by
\begin{equation}
\label{eq:bound_total_sens_X_rbf}
\sum\limits_{q \in P} \sup\limits_{x \in X} \frac{w(p) f(p,x)}{\sum\limits_{q \in P} w(q) f(q,x)} \leq e^{3R^2\norm{q^\ast}_2} \term{1 + 3R^2\norm{q^\ast}_2}\term{1+\term{d+2}^{1.5}}.
\end{equation}
\end{proof}

\thmcoresetrbf*
\begin{proof}
First , by plugging in the query space $(P,w,\REAL^d,f)$ into Lemma~\ref{lem:RBFsensbound}, we obtain that a bound on the sensitivities $s(p)$ for every $p \in P$ and a bound on the total sensitivities $t := e^{12R^2} \term{1 + 12R^2}\term{1+\term{d+2}^{1.5}}$, since the $\max_{q \in P} \norm{q}_2 \leq 1$. Notice that the analysis done in Lemma~\ref{lem:RBFsensbound} is analogues to the steps done in Algorithm~\ref{alg:mainAlg}.

By plugging the bounds on the sensitivities, the bound on the total sensitivity $t$, probability of failure $\delta \in (0,1)$, and approximation error $\eps \in (0,1)$ into Theorem~\ref{thm:coreset}, we obtain a subset $S^\prime \subseteq Q$ and $v^\prime : S^\prime \to [0,\infty)$ such that the tuple $\term{S^\prime,v^\prime}$ is an $\eps$-coreset for $\term{P,w}$ with probability at least $1-\delta$.
\end{proof}

\subsection{Proof of Theorem~\ref{thm:coreset:laplacian}}
\begin{lemma}[Sensitivity bound w.r.t. the \emph{Laplacian} loss function]
\label{lem:laplacianSensBound}
Let $\term{P,w,\REAL^d,f}$ be query space as in Definition~\ref{def:querySpace} where for every $p \in P$ and $x \in \REAL^d$, $f(p,x) = e^{-\norm{p-x}_2}$.
Let $P^\prime := \br{q_p =  \begin{bmatrix} \norm{p}_2^2, -2p^T, 1 \end{bmatrix}^T \mid \forall p \in P}$ and let $q^\ast \in \argsup_{q \in P^\prime} \, e^{3\sqrt{\norm{q}_2}} \term{1 + 3\sqrt{\norm{q}_2}} $.
Let $u(p) := \frac{w(p)}{e^{3\sqrt{\norm{q_p}_2}} \term{1 + 3\sqrt{\norm{q_p}_2}}}$ and let $(U,D,V)$ be the $\norm{\cdot}_1$-SVD of $\term{P^\prime, u^2(\cdot)}$. Then for every $p \in P$,
\[
s(p) \leq e^{3\sqrt{\norm{q_p}_2}} \term{1 + 3\sqrt{\norm{q_p}_2}} \term{\frac{u(p)}{\sum\limits_{p^\prime \in P} u\term{p^\prime}}+ u(p)\sqrt{\norm{U\term{q_p}}_1}} + \frac{e^{\norm{p}_2 + \sqrt{\norm{q^\ast}_2}} w(p)}{\sum\limits_{q \in P}w(q)},
\]
and 
\[
\sum\limits_{q \in P} s(q) \leq  2e^{3\sqrt{\norm{q^\ast}_2}}+ e^{3\sqrt{\norm{q^\ast}_2}} \term{1 + 3\sqrt{\norm{q^\ast}_2}}\term{1+\sqrt{n}\term{d+2}^{1.25}}.
\]
\end{lemma}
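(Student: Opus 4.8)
The plan is to mirror the reduction behind the RBF sensitivity bound (Lemma~\ref{lem:RBFsensbound}), exploiting the identity $\norm{p-x}_2 = \sqrt{\abs{q_p^Ty}}$ with $q_p = \begin{bmatrix}\norm{p}_2^2, -2p^T, 1\end{bmatrix}^T$ and $y = \begin{bmatrix}1, x, \norm{x}_2^2\end{bmatrix}^T$, so that the Laplacian loss becomes $f(p,x)=e^{-\sqrt{\abs{q_p^Ty}}}$. Relative to the RBF case there are two new difficulties: a square root sits inside the exponent, and $x$ ranges over all of $\REAL^d$ (there is no radius bound $R$), so the linearization of the exponential cannot hold globally. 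I would therefore split the supremum over $x$ into a \emph{near} regime $x\in B$, where $B$ is a ball of fixed radius (e.g.\ $2$) chosen so that $\norm{p-x}_2\le 3\sqrt{\norm{q_p}_2}$ for every $p$ (valid since $\norm{q_p}_2\ge 1$), and a \emph{far} regime $x\notin B$.

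In the near regime I would invoke the scalar content of Claim~\ref{clm:bound_exp_by_l1} with $s=\sqrt{\abs{q_p^Ty}}$ and $R=3\sqrt{\norm{q_p}_2}$: the upper bound $e^{-\sqrt{\abs{q_p^Ty}}}\le 1+\sqrt{\abs{q_p^Ty}}$ is applied to the numerator and the matching lower bound to each denominator term. After substituting $u(p')=w(p')/\left(e^{3\sqrt{\norm{q_{p'}}_2}}(1+3\sqrt{\norm{q_{p'}}_2})\right)$ and using $w(p)=u(p)\,e^{3\sqrt{\norm{q_p}_2}}(1+3\sqrt{\norm{q_p}_2})$, the ratio becomes $e^{3\sqrt{\norm{q_p}_2}}(1+3\sqrt{\norm{q_p}_2})\cdot \frac{u(p)(1+\sqrt{\abs{q_p^Ty}})}{\sum_{p'}u(p')(1+\sqrt{\abs{q_{p'}^Ty}})}$, and Claim~\ref{clm:frac} splits it into $\frac{u(p)}{\sum_{p'}u(p')}$ plus $\frac{u(p)\sqrt{\abs{q_p^Ty}}}{\sum_{p'}u(p')\sqrt{\abs{q_{p'}^Ty}}}$.

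The crux is the second term, and it is precisely why the $\ell_1$-SVD is taken with respect to $u^2$. Since all summands are nonnegative, $\sum_{p'}u(p')\sqrt{\abs{q_{p'}^Ty}}\ge \sqrt{\sum_{p'}u^2(p')\abs{q_{p'}^Ty}}$, whence by Cauchy--Schwarz and then Lemma~\ref{lem:sens_bound} applied to the query space $\term{P^\prime,u^2,\REAL^{d+2},\abs{\cdot}}$,
\[
\frac{u(p)\sqrt{\abs{q_p^Ty}}}{\sum_{p'}u(p')\sqrt{\abs{q_{p'}^Ty}}} \le \sqrt{\frac{u^2(p)\abs{q_p^Ty}}{\sum_{p'}u^2(p')\abs{q_{p'}^Ty}}} \le u(p)\sqrt{\norm{U(q_p)}_1},
\]
which is exactly the first (main) summand of the claimed per-point bound. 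For the far regime, where linearization is unavailable, I would instead use the reverse triangle inequality $\norm{p-x}_2\ge\norm{x}_2-\norm{p}_2$ together with $\norm{q-x}_2\le\norm{x}_2+\norm{q}_2$; the $e^{\pm\norm{x}_2}$ factors cancel, leaving the $x$-independent bound $\frac{w(p)e^{\norm{p}_2}}{\sum_q w(q)e^{-\norm{q}_2}}\le \frac{e^{\norm{p}_2+\sqrt{\norm{q^\ast}_2}}w(p)}{\sum_q w(q)}$, where $\norm{q}_2\le\sqrt{\norm{q^\ast}_2}$ controls the denominator. This is the additive summand. Since the supremum over all $x$ is at most the maximum of the two regime suprema, and the maximum is at most the sum, the per-point bound follows.

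For the total sensitivity I would sum the two pieces separately: the normalizing-constant terms contribute at most $C^\ast:=e^{3\sqrt{\norm{q^\ast}_2}}(1+3\sqrt{\norm{q^\ast}_2})$ because $\sum_p u(p)/\sum_{p'}u(p')=1$, while a second Cauchy--Schwarz gives $\sum_p u(p)\sqrt{\norm{U(q_p)}_1}\le \sqrt{n}\sqrt{\sum_p u^2(p)\norm{U(q_p)}_1}$, which is controlled by the total $\ell_1$-sensitivity bound $\sum_p u^2(p)\norm{U(q_p)}_1\le (d+2)^{1.5}$ of Lemma~\ref{lem:sens_bound}; this is the origin of the $\sqrt{n}$ factor, with the $(d+2)^{1.25}$ absorbing the $\sqrt{d}$ distortion of the $\ell_1$-SVD. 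The additive terms sum to $e^{\sqrt{\norm{q^\ast}_2}}\sum_p e^{\norm{p}_2}w(p)/\sum_q w(q)\le 2e^{3\sqrt{\norm{q^\ast}_2}}$. The single genuine obstacle is extracting the square root from the near-regime ratio while staying compatible with the $\ell_1$-SVD machinery; the $u^2$-weighting combined with the Cauchy--Schwarz step above is exactly what resolves it, and the remainder is bookkeeping parallel to the RBF proof.
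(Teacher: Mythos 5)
Your proposal is correct and follows the paper's own proof essentially step for step: the same lifting $\norm{p-x}_2=\sqrt{\abs{q_p^Ty}}$, the same split of the supremum into a bounded near regime (handled via Claim~\ref{clm:bound_exp_by_l1}, Claim~\ref{clm:frac}, the reweighting $u$, and the $u^2$-weighted $\norm{\cdot}_1$-SVD with the $\sum_i\sqrt{a_i}\ge\sqrt{\sum_i a_i}$ step feeding into Lemma~\ref{lem:sens_bound}) and a far regime handled by the triangle/reverse-triangle inequalities with $\norm{p}_2\le\sqrt{\norm{q^\ast}_2}$, and the same Cauchy--Schwarz summation producing the $\sqrt{n}$ factor in the total sensitivity. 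The only cosmetic deviation is your near-regime ball of radius $2$ in place of the paper's unit ball, which changes nothing since the resulting bound $\sqrt{\abs{q_p^Ty}}\le 3\sqrt{\norm{q_p}_2}$ still holds.
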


\begin{proof}
Let $X = \br{x\in\REAL^d \middle| \norm{x}_2 \leq 1}$, and observe that for every $p \in P$ and $x \in \REAL^d$, it holds that 
\begin{equation}
\label{eq:lap_equiv}
\norm{p - x}_2 = \sqrt{\abs{q_p^Ty}},
\end{equation}
where $q_p = \begin{bmatrix} \norm{p}_2^2, -2p^T, 1 \end{bmatrix}^T$ and $y = \begin{bmatrix} 1, x, \norm{x}_2^2 \end{bmatrix}^T$.

Let $Y = \br{\begin{bmatrix} 1, x, \norm{x}_2^2 \end{bmatrix}^T \middle| x \in X}$. Hence, following Theorem~\ref{thm:coreset}, the sensitivity of each point $p \in P$, can be rewritten as 
\begin{equation}
\label{eq:sens_1_lap}
\sup\limits_{x \in \REAL^d} \frac{w(p) f(p,x)}{\sum\limits_{q \in P} w(q) f(q,x)} \leq \sup\limits_{x \in X} \frac{w(p) f(p,x)}{\sum\limits_{q \in P} w(q) f(q,x)} + \sup\limits_{x \in \REAL^d\setminus X} \frac{w(p) f(p,x)}{\sum\limits_{q \in P} w(q) f(q,x)}.
\end{equation}

From here, we bound the sensitivity with respect to subspaces of $\REAL^d$.

\noindent\textbf{Handling queries from $X$.}
Following the definition of $Y$, for every $y \in Y$, we obtain that $\norm{y}_2 \leq 3$. Hence, by plugging $p:= q_p$ for every $p \in P$ and $R:= 3\norm{q_p}_2$ into Claim~\ref{clm:bound_exp_by_l1}, we obtain that the for every $y \in Y$ and $p \in P$, 
\begin{equation}
\label{eq:bound_on_e_lap}
\frac{1}{e^{3\sqrt{\norm{q_p}_2}} \term{1 + 3\sqrt{\norm{q_p}_2}}} \term{1 + \sqrt{\abs{q_p^T y}}} \leq e^{-\sqrt{\abs{q_p^Ty}}} \leq 1 + \sqrt{\abs{q_p^Ty}}.
\end{equation}

Note that for every $p \in P$, $u(p) := \frac{w(p)}{e^{3\sqrt{\norm{q_p}_2}} \term{1 + 3\sqrt{\norm{q_p}_2}}}$. Combining~\eqref{eq:sens_1_lap} and~\eqref{eq:bound_on_e_lap}, yields that
\begin{equation}
\begin{split}
\sup\limits_{x \in X} \frac{w(p) f(p,x)}{\sum\limits_{q \in P} w(q) f(q,x)} &\leq e^{3\sqrt{\norm{q_p}_2}} \term{1 + 3\sqrt{\norm{q_p}_2}} \sup\limits_{y \in Y} \frac{u(p) \sqrt{\abs{q_p^Ty}} + u(p)}{\sum\limits_{p^\prime \in P}u\term{p^\prime} \sqrt{\abs{q_{p^\prime}^Ty}} + \sum\limits_{p^\prime \in P} w\term{p^\prime}} \\
&\leq e^{3\sqrt{\norm{q_p}_2}} \term{1 + 3\sqrt{\norm{q_p}_2}} \term{\frac{u(p)}{\sum\limits_{p^\prime \in P} u\term{p^\prime}} +  \sup\limits_{y \in Y} \frac{u(p) \sqrt{\abs{q_p^Ty}}}{\sum\limits_{p^\prime \in P} u\term{p^\prime} \sqrt{\abs{q_{p^\prime}^Ty}}}},
\end{split}
\end{equation}
where the first inequality holds by Claim~\ref{clm:bound_exp_by_l1} and the second inequality is by Claim~\ref{clm:frac}.

By Cauchy-Schwartz inequality, 
\begin{align}
\sup\limits_{y \in Y} \frac{u(p) \sqrt{\abs{q_p^Ty}}}{\sum\limits_{p^\prime \in P} u\term{p^\prime} \sqrt{\abs{q_{p^\prime}^Ty}}} &= \sup\limits_{y \in Y} \frac{\sqrt{u(p)^2\abs{q_p^Ty}}}{\sum\limits_{p^\prime \in P} \sqrt{{u\term{p^\prime}}^2\abs{q_{p^\prime}^Ty}}} \nonumber\\
&\leq \sup\limits_{y \in Y} \frac{\sqrt{{u(p)}^2\abs{q_p^T y}}}{\sqrt{\sum\limits_{p^\prime \in P} u(q)^2 \abs{q_{p^\prime}^Ty}}} \nonumber\\
&\leq \sup\limits_{y \in \REAL^{d+2}} \frac{\sqrt{{u(p)}^2\abs{q_p^T y}}}{\sqrt{\sum\limits_{p^\prime \in P} u(q)^2 \abs{q_{p^\prime}^Ty}}},
\label{eq:sens_3_lap}
\end{align}
where the last inequality follows from the properties associated with the supremum operation.

Let $u^\prime : P^\prime \to [0,\infty)$ be a weight function such that for every $p \in P$, $u^\prime\term{q_p} = {u(p)}^2$, $f :  P^\prime \times Y \to [0, \infty)$ be a function such that for every $q \in  P^\prime$ and $y \in Y$, $f(q,y) = \abs{q^Ty}$. Plugging in $P:=  P^\prime$, $w:=u$, $d:=d+2$, and $f:=f$ into Lemma~\ref{lem:sens_bound} yields for every $p \in P$
\begin{equation}
\label{eq:bound_sens_X_lap}
\sup\limits_{x \in X} \frac{w(p) f(p,x)}{\sum\limits_{q \in P} w(q) f(q,x)} \leq e^{3\sqrt{\norm{q_p}_2}} \term{1 + 3\sqrt{\norm{q_p}_2}} \term{\frac{u(p)}{\sum\limits_{p^\prime \in P} u\term{p^\prime}}+ u(p)\sqrt{\norm{U\term{q_p}}_1}}.
\end{equation}

Note that by definition, $q^\ast \in\argsup\limits_{q \in P^\prime}\, e^{3\sqrt{\norm{q_p}_2}} \term{1 + 3\sqrt{\norm{q_p}_2}}$. Then the total sensitivity is bounded by
\begin{equation}
\label{eq:bound_total_sens_X_lap}
\sum\limits_{q \in P} \sup\limits_{x \in X} \frac{w(p) f(p,x)}{\sum\limits_{q \in P} w(q) f(q,x)} \leq e^{3\sqrt{\norm{q^\ast}_2}} \term{1 + 3\sqrt{\norm{q^\ast}_2}}\term{1+\sqrt{n}\term{d+2}^{1.25}},
\end{equation}
where the $\sqrt{n}$ follows from $\sum_{p^\prime \in P} \sqrt{\norm{U\term{q_{p^\prime}}}_1} \leq \sqrt{n} \sqrt{\sum_{p^\prime \in P} \norm{U\term{q_{p^\prime}}}_1}$, which is used when using Lemma~\ref{lem:sens_bound}. This inequality is a result of Cauchy-Schwartz's inequality.

\paragraph{Handling queries from $\REAL^d \setminus X$.} First, we observe that for any integer $m \geq 1$ and $x,y \in \REAL^m$,
\begin{equation}
\label{eq:triangle}
-\norm{x}_2 - \norm{y}_2 \leq -\norm{x-y}_2 \leq \norm{x}_2 - \norm{y}_2,
\end{equation}
where the first inequality holds by the triangle inequality, and the second inequality follows from the reverse triangle inequality. 

Thus, by letting $x_p \in \underset{x \in \REAL^d \setminus X}{\mathrm{arg\,sup}} \frac{w(p) f(p,x)}{\sum\limits_{q \in P} w(q) f(q,x)}$ for every $p \in P$, we obtain that
\begin{equation}
\label{eq:sens_4_lap}
\frac{w(p) f\term{p,x_p}}{\sum\limits_{q \in P} w(q) f\term{q,x_p}} \leq \frac{w(p) e^{\norm{p}_2 - \norm{x_p}_2}}{\sum\limits_{q \in P} w(q)e^{-\norm{q}_2 - \norm{x_p}_2}} \leq \frac{w(p) e^{\norm{p}_2 - \norm{x_p}_2}}{\sum\limits_{q \in P} w(q)e^{-\sqrt{\norm{q^\ast}_2} - \norm{x_p}_2}} = \frac{e^{\norm{p}_2 + \sqrt{\norm{q^\ast}_2}} w(p)}{\sum\limits_{q \in P}w(q)},
\end{equation}
where the first inequality holds by~\eqref{eq:triangle}, and the second inequality holds since $\sqrt{\norm{q^\ast}_2} \geq \norm{p}_2$ for every $p \in P$.

Combining~\eqref{eq:sens_1_lap},~\eqref{eq:bound_sens_X_lap},~\eqref{eq:bound_total_sens_X_lap}, and~\eqref{eq:sens_4_lap}, yields that for every $p \in P$
\begin{equation*}
\begin{split}
s(p) \leq e^{3\sqrt{\norm{q_p}_2}} \term{1 + 3\sqrt{\norm{q_p}_2}} \term{\frac{u(p)}{\sum\limits_{p^\prime \in P} u\term{p^\prime}}+ u(p)\sqrt{\norm{U\term{q_p}}_1}} + \frac{e^{\norm{p}_2 + \sqrt{\norm{q^\ast}_2}} w(p)}{\sum\limits_{q \in P}w(q)}    
\end{split}
\end{equation*}
and
\[
\sum\limits_{p \in P} s(p) \leq 2e^{3\sqrt{\norm{q^\ast}_2}}+ e^{3\sqrt{\norm{q^\ast}_2}} \term{1 + 3\sqrt{\norm{q^\ast}_2}}\term{1+\sqrt{n}\term{d+2}^{1.25}}.
\]


\end{proof}

\thmcoresetlaplacian*
\begin{proof}
Plugging in the query space $(P,w,\REAL^d,f)$ into Lemma~\ref{lem:laplacianSensBound}, we obtain that a bound on the sensitivities $s(p)$ for every $p \in P$ and a bound on the total sensitivities $t := 2e^5 + 3e^{5}\term{1+5}\term{1+\sqrt{n}\term{d+2}^{1.5}}$, since the $\max_{q \in P} \norm{q}_2 \leq 1$. By plugging the bounds on the sensitivities, the bound on the total sensitivity $t$, probability of failure $\delta \in (0,1)$, and approximation error $\eps \in (0,1)$ into Theorem~\ref{thm:coreset}, we obtain a subset $S^\prime \subseteq Q$ and $v^\prime : S^\prime \to [0,\infty)$ such that the tuple $\term{S^\prime,v^\prime}$ is an $\eps$-coreset for $\term{P,w}$ with probability at least $1-\delta$.
\end{proof}

\subsection{Proof of Theorem~\ref{thm:coreset:rbfn}}
To prove Theorem~\ref{thm:coreset:rbfn}, we first prove the following theorem.
\begin{restatable}{theorem}{thmcoresetactivation}
\label{thm:coreset:activation}
There exists an algorithm that samples  $O\term{\frac{e^{8R^2}R^2d^{1.5}}{\eps^2}\term{R^2 + \log{d} + \log{\frac{2}{\delta}}}}$ points to form weighted sets $\term{S_1,w_1}$ and $\term{S_2,w_2}$ such that with probability at least $1-2\delta$,
\[\frac{\abs{\sum\limits_{p \in P} y(p)\rho(\|p-c^{(i)}\|_2) - \term{\sum\limits_{p\in S_1}w_1(p)e^{-\norm{p-c^{(i)}}_2^2} + \sum\limits_{q\in S_2} w_2(q)e^{-\norm{q - c^{(j)}}_2^2}}}}{\abs{ \phi^+\term{c^{(i)}}+\phi^-\term{c^{(i)}}}}\le\eps.\]
\end{restatable}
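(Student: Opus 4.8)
The plan is to approximate the two ``halves'' of the target quantity separately and then recombine them by the triangle inequality. Recall that $\sum_{p\in P}y(p)\rho(\|p-c^{(i)}\|_2)=\phi^+(c^{(i)})-\phi^-(c^{(i)})$, where $\phi^+$ collects the contributions of the points with positive label and $\phi^-$ those with negative label. The observation that makes everything go through is that each of $\phi^+$ and $\phi^-$ is itself a \emph{weighted \emph{RBF} cost function}: $\phi^+(x)=\sum_{p\in P,\,y(p)>0}y(p)e^{-\|p-x\|_2^2}$ is exactly the RBF cost of the point set $P^+=\br{p:y(p)>0}$ under the nonnegative weight function $w^+(p)=y(p)$, and likewise $\phi^-(x)$ is the RBF cost of $P^-=\br{p:y(p)<0}$ under $w^-(p)=\abs{y(p)}$.

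First I would invoke Theorem~\ref{thm:coreset:rbf} twice, once on the query space $(P^+,w^+,X,f)$ and once on $(P^-,w^-,X,f)$, each with approximation parameter $\eps$ and failure probability $\delta$. Since $P^+,P^-\subseteq P$ lie in the unit ball and $X$ is the ball of radius $R$, both invocations are legitimate and each returns a weighted subset of the size guaranteed by Theorem~\ref{thm:coreset:rbf}, giving $(S_1,w_1)$ and $(S_2,w_2)$. On the event that both constructions succeed we have, for every $x\in X$,
\[
\Big|\phi^+(x)-\sum_{p\in S_1}w_1(p)e^{-\|p-x\|_2^2}\Big|\le\eps\,\phi^+(x)
\quad\text{and}\quad
\Big|\phi^-(x)-\sum_{q\in S_2}w_2(q)e^{-\|q-x\|_2^2}\Big|\le\eps\,\phi^-(x),
\]
in particular at $x=c^{(i)}$, which lies in $X$ because $\|c^{(i)}\|_2\le R$. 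A union bound shows both events hold simultaneously with probability at least $1-2\delta$, matching the $1-2\delta$ in the statement and the $\log\frac{2}{\delta}$ in the sample size.

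Next I would combine the two estimates with the signs dictated by the decomposition $\phi=\phi^+-\phi^-$. Writing $\widehat{\phi^+}(c^{(i)})=\sum_{p\in S_1}w_1(p)e^{-\|p-c^{(i)}\|_2^2}$ and $\widehat{\phi^-}(c^{(i)})=\sum_{q\in S_2}w_2(q)e^{-\|q-c^{(i)}\|_2^2}$, the triangle inequality yields
\[
\Big|\big(\phi^+(c^{(i)})-\phi^-(c^{(i)})\big)-\big(\widehat{\phi^+}(c^{(i)})-\widehat{\phi^-}(c^{(i)})\big)\Big|
\le\eps\,\phi^+(c^{(i)})+\eps\,\phi^-(c^{(i)})
=\eps\big(\phi^+(c^{(i)})+\phi^-(c^{(i)})\big).
\]
Dividing by $\abs{\phi^+(c^{(i)})+\phi^-(c^{(i)})}$ delivers the claimed bound, the second coreset entering with the sign forced by the decomposition.

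The step requiring the most care is the normalization, not any single calculation. The guarantee is \emph{not} a multiplicative $(1\pm\eps)$ approximation of the signed quantity $\phi^+(c^{(i)})-\phi^-(c^{(i)})$, which could be arbitrarily small through cancellation; rather it is an additive error controlled by $\eps\big(\phi^+(c^{(i)})+\phi^-(c^{(i)})\big)$. This is the sharpest statement obtainable from approximating the two halves independently, and it is precisely why the denominator is the sum $\abs{\phi^+(c^{(i)})+\phi^-(c^{(i)})}$ rather than $\abs{\sum_{p}y(p)\rho(\|p-c^{(i)}\|_2)}$. The remaining verifications are routine: that $P^+$ and $P^-$ inherit the unit-ball hypothesis from $P$, and that the doubled sample size (two coresets, union bound) still matches the bound stated in the theorem up to the absolute constants.
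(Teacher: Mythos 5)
Your proposal is correct and follows essentially the same route as the paper's proof: split the signed sum into $\phi^+$ and $\phi^-$, invoke Theorem~\ref{thm:coreset:rbf} once on each half (each weight function being nonnegative), union bound, then combine via the triangle inequality and normalize by $\phi^+\term{c^{(i)}}+\phi^-\term{c^{(i)}}$. If anything, your probability bookkeeping (failure probability $\delta$ per coreset, yielding $1-2\delta$) matches the theorem statement more directly than the paper's own $\delta/2$-per-coreset accounting, and your remark that the second coreset must enter with a minus sign correctly identifies what the displayed formula's ``$+$'' should be read as.
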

\begin{proof}
We first construct strong coresets for both $\phi^+$ and $\phi^-$. 
If $\|\mathbf{c}_i\|_2\le R$ for all $i\in[n]$, then by Theorem~\ref{thm:coreset:rbf}, it suffices to sample a weighted weight $S$ of size $O\term{\frac{e^{12R^2}R^2d^{1.5}}{\eps^2}\term{R^2 + \log{d} + \log{\frac{2}{\delta}}}}$ to achieve a strong coreset $S_1$ with corresponding weighting function $w_1$ for $\phi^+$ with probability at least $1-\frac{\delta}{2}$. 
Similarly, we obtain a strong coreset for $\phi^-$ with probability at least $1-\frac{\delta}{2}$ by sampling a set $S_2$ with weights $w_2$ of size $O\term{\frac{e^{8R^2}R^2d^{1.5}}{\eps^2}\term{R^2 + \log{d} + \log{\frac{2}{\delta}}}}$. 

Hence by the definition of a strong coreset, we have that
\begin{align*}
\left|\sum_{\substack{p\in P\\y(p)>0}}y(p)\rho(\|p-c^{(i)}\|_2)-\sum_{p\in S_1} w_1(p)\rho(\|p-c^{(i)}\|_2)\right|&\le\eps\sum_{\substack{p\in P\\y(p)>0}}y(p)\rho(\|p-c^{(i)}\|_2)\\
&=\eps\phi^+(c^{(i)})
\end{align*}
and
\begin{align*}
\left|\sum_{\substack{p\in P\\y(p)<0}}y(p)\rho(\|p-c^{(i)}\|_2)-\sum_{q\in S_2} w_2(q)\rho(\|q-c^{(i)}\|_2)\right|&\le\eps\sum_{\substack{p\in P\\y(p)<0}}|y(p)|\rho(\|p-c^{(i)}\|_2)\\
&=\eps\phi^-(c^{(i)})
\end{align*}
for any input $\mathbf{x}\in\mathbb{R}^n$. 

Thus by triangle inequality and a slight rearrangement of the inequality, we have that with probability at least $1-\delta$,
\[\frac{\abs{\sum\limits_{p \in P} y(p)\rho(\|p-c^{(i)}\|_2) - \term{\sum\limits_{p\in S_1}w_1(p)e^{-\norm{p-c^{(i)}}_2^2} + \sum\limits_{q\in S_2} w_2(q)e^{-\norm{q - c^{(j)}}_2^2}}}}{\abs{ \phi^+\term{c^{(i)}}+\phi^-\term{c^{(i)}}}}\le\eps.\]
\end{proof}
To prove Theorem~\ref{thm:coreset:rbfn}, we split $\alpha_i$ into the sets $\{i|\alpha_i>0\}$ and $\{i|\alpha_i<0\}$ and apply Theorem~\ref{thm:coreset:activation} to each of the sets. 
We emphasize that this argument is purely for the purposes of analysis so that the algorithm itself does not need to partition the $\alpha_i$ quantities (and so the algorithm does not need to recompute the coreset when the values of the weights $\alpha_i$ change over time). 
We thus get the following guarantee:
\thmcoresetrbfn*

\section{Lower bound on the coreset size for the Gaussian loss function - illustration}
\label{sec:exp_extilu}

Here, we illustrate one dataset such that for any approximation $\eps$, the $\eps$-coreset must contain at least half the points to ensure the desired approximation from a theoretical point of view.

\begin{figure}[H]
\centering
\begin{tikzpicture}[scale=0.5]
        \def \n {10}
        \def \radius {3}
        \draw circle(\radius)
              foreach\s in{1,...,\n}{
                  (-360/\n*\s:-\radius)circle(.4pt)circle(.8pt)circle(1.2pt)
                  node[anchor=-360/\n*\s]{$p_{\s}$}
              };
    \end{tikzpicture}
\label{fig:lower_bound}
\caption{Evenly distributed points on some circle where the minimal distance between each point and any other point is at least $\sqrt{\ln{n}}$, where in this example $n = 10$. }
\end{figure}
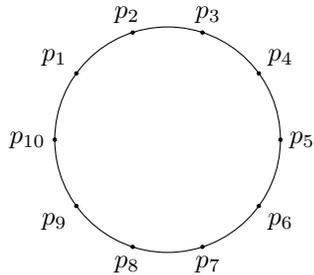

\section{Experimental Results - Extended}
\label{sec:exp_ext}

\subsection{MNIST results}
In what follows, we present our subset selection results on the MNIST dataset at Table~\ref{table:mnist}.
\begin{table}[htb!]
\centering
\caption{Data Selection Results for MNIST using LeNet}\label{table:mnist}
\adjustbox{max width=\textwidth}{
\begin{tabular}{c|c c c c|c c c c} \hline \hline
\multicolumn{1}{c|}{} & \multicolumn{4}{c|}{Top-1 Test accuracy of the Model(\%)} & \multicolumn{4}{c}{Model Training time(in hrs)} \\ 
\multicolumn{1}{c|}{Budget(\%)} & \multicolumn{1}{c}{1\%} & \multicolumn{1}{c}{3\%} & \multicolumn{1}{c}{5\%} & \multicolumn{1}{c|}{10\%} & \multicolumn{1}{c}{1\%} & \multicolumn{1}{c}{3\%} & \multicolumn{1}{c}{5\%} & \multicolumn{1}{c}{10\%} \\ \hline
\textsc{Full} (skyline for test accuracy)  &99.35 &99.35 &99.35 &99.35 &0.82 &0.82 &0.82 &0.82\\ 
\textsc{Random} (skyline for training time) &94.55 &97.14 &97.7 &98.38 &\underline{0.0084} &\underline{0.03} &\underline{0.04} &\underline{0.084}\\
\textsc{Random-Warm} (skyline for training time) &98.8 &99.1 &99.1 &99.13 &0.0085 &0.03 &0.04 &0.085\\\hline
\textsc{Glister} &93.11 &98.062 &99.02 &99.134 &0.045 &0.0625 &0.082 &0.132\\
\textsc{Glister-Warm} &97.63 &98.9 &99.1 &99.15 &0.04 &0.058 &0.078 &0.127\\
\textsc{Craig} &96.18 &96.93 &97.81 &98.7 &0.3758 &0.4173 &0.434 &0.497\\
\textsc{Craig-Warm}  &98.48 &98.96 &99.12 &99.14 &0.2239 &0.258 &0.2582 &0.3416\\
\textsc{CraigPB}  &97.72 &98.47 &98.79 &99.05 &0.08352 &0.106 &0.1175 &0.185\\
\textsc{CraigPB-Warm}  &98.47 &99.08 &99.01 &99.16 &0.055 &0.077 &0.0902 &0.1523\\
\textsc{GradMatch}  &98.954 &99.174 &99.214 &99.24 &0.05 &0.0607 &0.097 &0.138\\
\textsc{GradMatch-Warm}  &98.86 &99.22 &99.28 &99.29 &0.046 &0.057 &0.089 &0.132\\
\textsc{GradMatchPB}  &98.7 &99.1 &99.25 &99.27 &0.04 &0.051 &0.07 &0.11\\
\textsc{GradMatchPB-Warm}  &\textbf{99.0} & \textbf{99.23} &{99.3} &{99.31} &{0.038} &{0.05} &{0.065} &{0.10}\\ \hline
\textsc{RBFNN Coreset (OURS)} & {98.98} & {99.2} & \textbf{99.31} & \textbf{99.32} & 0.028 & 0.051 & 0.062& 0.098\\ \hline\hline
\end{tabular}}
    \label{tab:data_sel_mnist}
\end{table}

\subsection{Standard deviation and statistical significance results}
Tables~\ref{tab:data_sel_cifar10_std}--\ref{tab:data_sel_mnist_std} show the standard deviation results over five training runs on CIFAR10, CIFAR100, and MNIST datasets, respectively. 

\begin{table}[b!]
\centering
\caption{Data Selection Results for CIFAR10 using ResNet-18: Standard deviation of the Model (for 5 runs)}\label{tab:data_sel_cifar10_std}
\adjustbox{max width=\textwidth}{
\begin{tabular}{c|c c c c} \hline \hline
\multicolumn{1}{c|}{} & \multicolumn{4}{c|}{Standard deviation of the Model(for 5 runs)}  \\  
\multicolumn{1}{c|}{Budget(\%)} & \multicolumn{1}{c}{5\%} & \multicolumn{1}{c}{10\%} & \multicolumn{1}{c}{20\%} & \multicolumn{1}{c|}{30\%} \\ \hline
\textsc{Full} (skyline for test accuracy)  &0.032 &0.032 &0.032 & 0.032  \\ 
\textsc{Random} (skyline for training time) &0.483 &0.518 &0.524 &0.538  \\
\textsc{Random-Warm} (skyline for training time) &0.461 &0.348 &0.24& 0.1538 \\\hline
 \textsc{Glister} &0.453 &0.107 &0.046 &0.345 \\
 \textsc{Glister-Warm} &0.325 &0.086 &0.135 &0.129 \\
 \textsc{Craig}  &0.289 &0.2657& 0.1894 &0.1647 \\
 \textsc{Craig-Warm}  &0.123 &0.1185 &0.1058 &0.1051\\
 \textsc{CraigPB}  &0.152 &0.1021 &0.086 &0.064 \\
 \textsc{CraigPB-Warm}  &0.0681 &0.061 &0.0623 &0.0676 \\
 \textsc{GradMatch}  &0.192 &0.123 &0.112 &0.1023\\
 \textsc{GradMatch-Warm}  &0.1013 &0.1032 &0.091 &0.1034 \\
 \textsc{GradMatchPB}  &0.0581 &0.0571 &0.0542 &0.0584 \\
\textsc{GradMatchPB-Warm}  &0.0542 &0.0512 &0.0671 &0.0581  \\
\hline
\textsc{RBFNN Coreset (OURS)} & 0.25 & 0.2 & 0.17 & 0.13 \\
\textsc{RBFNN Coreset-WARM (OURS)} & 0.21 & 0.16 & 0.13 & 0.12  \\\hline\hline
\end{tabular}}
\end{table}

\begin{table}[t!]
\centering
\caption{Data Selection Results for CIFAR100 using ResNet-18: Standard deviation of the Model (for 5 runs)}\label{tab:data_sel_cifar100_std}
\adjustbox{max width=\textwidth}{
\begin{tabular}{c|c c c c} \hline \hline
\multicolumn{1}{c|}{} & \multicolumn{4}{c|}{Standard deviation of the Model(for 5 runs)}  \\  
\multicolumn{1}{c|}{Budget(\%)} & \multicolumn{1}{c}{5\%} & \multicolumn{1}{c}{10\%} & \multicolumn{1}{c}{20\%} & \multicolumn{1}{c|}{30\%} \\ \hline
\textsc{Full} (skyline for test accuracy)  &0.051 &0.051
&0.051
&0.051  \\ 
\textsc{Random} (skyline for training time) &0.659
&0.584
&0.671
&0.635  \\
\textsc{Random-Warm} (skyline for training time) &0.359
&0.242
&0.187
&0.175\\\hline
 \textsc{Glister} &0.463
&0.15
&0.061
&0.541 \\
 \textsc{Glister-Warm} &0.375
&0.083
&0.121
&0.294 \\
 \textsc{Craig}  &0.3214 
&0.214
&0.195
&0.187 \\
 \textsc{Craig-Warm}  &0.18
&0.132
&0.125
&0.115\\
 \textsc{CraigPB}  &0.12
&0.134
&0.123
&0.115 \\
 \textsc{CraigPB-Warm}  &0.1176 
&0.1152 
&0.1128
&0.111 \\
 \textsc{GradMatch}  &0.285
&0.176
&0.165
&0.156\\
 \textsc{GradMatch-Warm}  &0.140
&0.134
&0.142
&0.156 \\
 \textsc{GradMatchPB}  &0.104
&0.111
&0.105
&0.097 \\
\textsc{GradMatchPB-Warm}  &0.093
&0.101
&0.100
&0.098  \\
\hline
\textsc{RBFNN Coreset (OURS)} & 0.3  & 0.19 & 0.18 & 0.16 \\
\textsc{RBFNN Coreset-WARM (OURS)} & 0.19 & 0.14 & 0.11 & 0.1  \\\hline\hline
\end{tabular}}
\end{table}

\begin{table}[t!]
\centering
\caption{Data Selection Results for MNIST using LeNet: Standard deviation of the Model (for 5 runs)}\label{tab:data_sel_mnist_std}
\adjustbox{max width=\textwidth}{
\begin{tabular}{c|c c c c} \hline \hline
\multicolumn{1}{c|}{} & \multicolumn{4}{c|}{Standard deviation of the Model(for 5 runs}  \\  
\multicolumn{1}{c|}{Budget(\%)} & \multicolumn{1}{c}{1\%} & \multicolumn{1}{c}{3\%} & \multicolumn{1}{c}{5\%} & \multicolumn{1}{c|}{10\%} \\ \hline
\textsc{Full} (skyline for test accuracy)  &0.012
&0.012
&0.012
&0.012  \\ 
\textsc{Random} (skyline for training time) &0.215
&0.265
&0.224
&0.213  \\
\textsc{Random-Warm} (skyline for training time) &0.15
&0.121
&0.110
&0.103\\\hline
 \textsc{Glister} &0.256
&0.218
&0.145
&0.128 \\
 \textsc{Glister-Warm} &0.128
&0.134
&0.119
&0.124 \\
 \textsc{Craig}  &0.186
&0.178
&0.162
&0.125 \\
 \textsc{Craig-Warm}  &0.0213 
&0.0223 
&0.0196
&0.0198\\
 \textsc{CraigPB}  &0.021 
&0.0209 
&0.0216
&0.0204 \\
 \textsc{CraigPB-Warm}  &0.023 &0.0192 &0.0212
&0.0184 \\
 \textsc{GradMatch}  &0.156
&0.128
&0.135
&0.12\\
 \textsc{GradMatch-Warm}  &0.087
&0.084 
&0.0896
&0.0815\\
 \textsc{GradMatchPB}  &0.0181 &0.0163 &0.0147 &0.0129 \\
\textsc{GradMatchPB-Warm}  &0.0098 &0.012 &0.0096
&0.0092  \\
\hline
\textsc{RBFNN Coreset (OURS)} & 0.18 & 0.13 & 0.11 & 0.1 \\
\textsc{RBFNN Coreset-WARM (OURS)} & 0.09 & 0.08 & 0.05 & 0.01 \\\hline\hline
\end{tabular}}
\end{table}

\end{document}